\newenvironment{linenomath}{}{} 
\newcommand\mlabel[1]{\label{#1}}
\theoremstyle{definition}
\newtheorem{definition}{Definition}[section]
\newtheorem{lemma}[definition]{Lemma}
\newtheorem{theorem}[definition]{Theorem}
\newtheorem{corollary}[definition]{Corollary}
\newtheorem{proposition}[definition]{Proposition}
\newcommand{\var}[1]{\ensuremath{\operatorname{var}(#1)}}
\newcommand{\lit}[1]{\ensuremath{\operatorname{lit}(#1)}}
\newcommand{\range}[1]{\ensuremath{\operatorname{range}(#1)}}
\newcommand\vek[1]{\mathbf{#1}}
\newcommand\assign[1]{\mathbf{#1}}
\newcommand{\xdrenc}[2]{\ensuremath{\operatorname{DR^+}(#1, #2)}}
\newcommand{\litVar}[1]{\ensuremath{\llbracket #1\rrbracket}}
\newcommand{\meta}[1]{\ensuremath{\operatorname{meta}(#1)}}
\newcommand{\metaI}[2]{\ensuremath{\operatorname{meta}_{#1}(#2)}}
\renewcommand{\phi}{\varphi}
\newcommand{\metaVar}[2]{\ensuremath{\llbracket #2\rrbracket_{#1}}}
\newcommand{\leaf}[1]{\ensuremath{\operatorname{leaf}(#1)}}
\newcommand{\exonerep}[1]{\ensuremath{\operatorname{eo}(#1)}}
\newcommand{\encPC}{\ensuremath{\mathcal{P}}}
\newcommand\mynobreakpar{\par\nobreak\@afterheading} 
\newcounter{nnfgrprowcntr}[table]
\renewcommand{\thennfgrprowcntr}{N\arabic{nnfgrprowcntr}}
\newcolumntype{N}{>{\refstepcounter{nnfgrprowcntr}{\thennfgrprowcntr}}c}
\newcommand{\manuallabel}[2][]{\ifthenelse{\isempty{#1}}{#2\def\@currentlabel{#2}\label{#2}}{#1\def\@currentlabel{#1}\label{#2}}}
\definecolor{urlcolor}{HTML}{00009B}
\definecolor{citecolor}{HTML}{9B0000}
\title{Backdoor Decomposable Monotone Circuits and CNF Encodings}
\author{Petr Ku{\v c}era\thanks{Department of Theoretical Computer Science
and Mathematical Logic,
Faculty of Mathematics and Physics,
Charles University, Czech Republic,
kucerap@ktiml.mff.cuni.cz}
   \and Petr Savick{\'y}\thanks{Institute of
      Computer Science of
   the Czech Academy of Sciences,
   Czech Republic,
   savicky@cs.cas.cz}
}
\date{\empty}
\begin{document}

\maketitle

\begin{abstract}
   We describe a compilation language of backdoor decomposable
   monotone circuits (BDMCs) which generalizes several concepts
   appearing in the literature, e.g.\ DNNFs and backdoor trees. A
   \(\mathcal{C}\)-BDMC sentence is a monotone circuit which satisfies
   decomposability property (such as in DNNF) in which the inputs (or
   leaves) are associated with CNF encodings from a given base class
   \(\mathcal{C}\). We consider the class of propagation complete (PC)
   encodings as a base class and we show that PC-BDMCs are
   polynomially equivalent to PC encodings. Additionally, we use this
   to determine the properties of PC-BDMCs and PC encodings with respect to
   the knowledge compilation map including the list of efficient
   operations on the languages.
\end{abstract}

\section{Introduction}

In knowledge compilation~\cite{DM02,M15}, we are concerned with
transforming a given propositional theory into a form which allows efficient
query answering and manipulation. The form of the output
representation is specified by a target compilation language. Lots of
target languages were described in knowledge compilation
map~\cite{DM02} which was later extended with other languages and
their disjunctive closures~\cite{FM08a}. We are in particular interested in
languages based on conjunctive normal forms (CNF) which are used to
encode various constraints into SAT\@. Since unit propagation is a
basic procedure used in DPLL based SAT solvers including CDCL solvers,
it has become a common practice to require that unit propagation
maintains at least some level of local consistency in the constraints
being encoded into a CNF formula.

Close connection between unit propagation in SAT solvers and maintaining
generalized arc consistency (GAC) was investigated for example
by~\citet{B07,BKNW09}. A stronger notion of propagation complete (PC)
encodings was introduced by~\citet{BM12} as a generalization of unit
refutation complete (URC) encodings~\cite{V94}. A formula \(\varphi\)
is propagation complete, if its consistency with a partial assignment
can be checked by unit propagation and in case the formula is
consistent with a partial assignment, unit propagation derives all
implied literals. When encoding a constraint into a CNF we usually
distinguish two kinds of variables, the main variables which
directly correspond to the variables of a constraint, and auxiliary
variables. A well known example is encoding of a circuit into a CNF
formula using Tseitin's encoding~\cite{T83} where the main variables
correspond to the inputs of the circuit and auxiliary variables
correspond to the gates. Let us note that PC
encodings treat all variables in the same way: the propagation
properties of a PC encoding with respect to the auxiliary variables is
the same as with respect to the main variables.
By way of contrast, if the encoding only maintains
GAC~\citep[see e.g.][]{B07} or, equivalently, domain consistency, the propagation
properties are required only for the main variables (GAC encoding in
this paper). A systematic study of encodings of multi-valued decision
diagrams (MDDs) with different
propagation strength is presented by~\cite{AGMES16} including a construction
of a polynomial size PC encoding for an arbitrary MDD.

Let us include a few remarks concerning
a weaker notion of consistency checker (CC encoding in this
paper) considered, for example, by~\cite{BKNW09,AGMES16}.
In the case of a CC encoding, unit propagation detects inconsistency
with a partial assignment of the main variables.
Following the framework of closures initiated by \citet{FM08a},
\citet{BJM12} studied disjunctive closures of different types
of CNF encodings and their placement into knowledge compilation map.
In particular, they consider CC encodings denoted as \(\exists\operatorname{\mathtt{URC-C}}\)
and one of their results is that the language of
disjunctions of CC encodings is polynomially equivalent to the
language of CC encodings. We generalize this by proving that
the language of disjunctions of PC encodings is polynomially
equivalent to the language of PC encodings.
Using a slightly simpler construction, we also prove that
the language of disjunctions of URC encodings is polynomially
equivalent to the language of URC encodings.

The above results are a consequence of a more general construction. We
introduce a new target compilation language parameterized with a class
$\mathcal{C}$ of encodings. For the special case when \(\mathcal{C}\)
is the class of PC encodings, we demonstrate a transformation of a
sentence in this language into a single PC encoding. 

A sentence in the proposed language
is a monotone circuit whose inputs called leaves are defined by
CNF encodings from a suitable base class \(\mathcal{C}\).
Additionally, the conjunctions in the circuit satisfy a
decomposability property with respect to the input variables similar
to the language of decomposable negation normal forms~\citep[DNNF, introduced
by][]{D99}. We call this structure backdoor decomposable
monotone circuit with respect to the base class \(\mathcal{C}\)
(\(\mathcal{C}\)-BDMC), because it is closely related to
\(\mathcal{C}\)-backdoor trees introduced by~\citet{SS08}.
By definition, the language of URC-BDMCs is a strict superset
of the language of disjunctions of URC encodings studied in \citet{BJM12}
as \(\operatorname{\mathtt{URC-C}}[\lor, \exists]\).

BDMC generalizes also other concepts appearing in the literature.
A DNNF can be
understood as a special case of $\mathcal{C}$-BDMC for any class
$\mathcal{C}$ containing the literals.
If we consider circuits with only one node, we obtain that PC-BDMC
sentences generalize PC formulas and URC-BDMC sentences generalize URC
formulas. In the rest of the paper, we
mostly consider PC-BDMCs, but the results can be transferred to
URC-BDMCs as well.
Note that monotone CNFs are propagation complete and thus they are
special cases of PC-BDMCs.
Combining it with the results of~\citet{BCMS14} or~\citet{BCMS16}
and the fact that DNNFs are also special cases of
PC-BDMCs, we obtain that the language of PC-BDMCs is strictly more
succinct than the language of DNNFs in the sense of the knowledge
compilation map~\cite[see][]{DM02}.

Backdoor trees introduced by~\citet{SS08}
were not primarily considered as a language for representing boolean
functions. Generalizing the idea of PC-backdoor trees towards
this goal, we directly
obtain a special case of PC-BDMCs in which the circuit part has form
of an out-arborescence in which we allow only
decision nodes and leaves are associated with PC encodings
without auxiliary variables. The difference from the original
backdoor trees is that the formulas in the leaves
are not necessarily restrictions of the same formula.
We show that PC-BDMCs are strictly more
succinct than the PC-backdoor trees generalized in this way.

The main result of our paper is that a PC-BDMC can be compiled into a
PC encoding of size polynomial with respect to the total size of the
input BDMC\@. As a consequence, we get that both PC-BDMCs and PC
encodings share the same algorithmic properties while being equally succinct. In
particular, we argue that the properties of PC-BDMCs and PC encodings
with respect to query answering and transformations described in the
knowledge compilation map~\cite{DM02} are the same as in the case of
DNNFs. At the same time, both PC-BDMCs and PC encodings are strictly
more succinct than DNNFs which in our opinion makes both these
language good target compilation languages.

A compilation of a smooth DNNF
into a URC or PC encoding of polynomial size was described
by~\citet{KS19} building on previous consistency checking and domain
consistency maintaining (or GAC) encodings described
by~\citet{AGMES16,GS12,BJKW08}.
We generalize these results to a more general structure, where the leaves
contain PC encodings instead of single literals.

\section{Definitions and Notation}
\label{sec:def} 

We assume the reader is familiar with the basics of propositional
logic, especially with the notion of entailment \(\models\) and the
notation related to formulas in conjunctive normal form (\emph{CNF
   formula}). We use \(\lit{\vek{x}}\) to denote the set of literals
(\(x\), \(\neg x\)) over the set of variables \(\vek{x}\). We treat a
clause as a set of literals and a CNF formula as a set of clauses. A
\emph{partial assignment} \(\alpha\) of values to variables in
\(\vek{z}\) is a subset of \(\lit{\vek{z}}\) that does not contain a
complementary pair of literals. A full assignment
\(\assign{a}:\vek{x}\to\{0, 1\}\) is a special type of partial
assignment and we use the representations by a function or by a set of literals
interchangeably.
We consider encodings of boolean functions defined as follows.

\begin{definition}[Encoding]
   \mlabel{def:cnf-enc} 
   Let \(f(\vek{x})\) be a boolean function on variables \(\vek{x}=(x_1, \dots,
      x_n)\). Let \(\varphi(\vek{x},\vek{y})\) be a CNF formula
   on \(n+m\) variables where \(\vek{y}=(y_1, \dots,
      y_m)\).
   We call \(\varphi\) a \emph{CNF encoding} of \(f\) if
   \begin{linenomath}
   \begin{equation}
      \label{eq:enc-def}
      f(\vek{x}) \equiv (\exists
      \vek{y})\, \varphi(\vek{x},
      \vek{y})\,.
   \end{equation}
   \end{linenomath}
   The variables
   in \(\vek{x}\) and \(\vek{y}\) are called \emph{main variables} and
   \emph{auxiliary variables}, respectively.
\end{definition}

We use \(\varphi\vdash_1 C\) to denote the fact that a clause \(C\)
can be derived by unit propagation interpreted as unit resolution
from a CNF formula \(\varphi\) (in particular, if \(C\in \varphi\),
then \(\varphi\vdash_1 C\)). The
notion of a propagation complete CNF formula was introduced
by~\citet{BM12} as a generalization of a unit refutation complete CNF
formula introduced by~\citet{V94}.

\begin{definition}[Propagation complete encoding]
   \mlabel{def:pc-enc} 
   Let \(\varphi(\vek{x}, \vek{y})\) be a CNF encoding of a boolean
   function defined on a set of variables \(\vek{x}\).
   We say that the encoding \(\varphi\) is \emph{propagation complete (PC)},
   if for every partial assignment
   \(\alpha\subseteq\lit{\vek{x}\cup\vek{y}}\) and for each
   \(l\in\lit{\vek{x}\cup\vek{y}}\), such that
   \begin{linenomath}
   \begin{equation}
      \label{eq:pc-enc-1}
      \varphi(\vek{x}, \vek{y})\land\alpha\models l
   \end{equation}
   \end{linenomath}
   we have
   \begin{linenomath}
   \begin{equation}
      \label{eq:pc-enc-2}
      \varphi(\vek{x}, \vek{y})\wedge\alpha\vdash_1 l
      \hspace{1em}\text{or}\hspace{1em}
      \varphi(\vek{x}, \vek{y})\wedge\alpha\vdash_1 \bot\,.
   \end{equation}
   \end{linenomath}
\end{definition}

\section{Backdoor Decomposable Monotone Circuits and the Main Result}
\label{sec:bdmc} 

In this section we introduce a language of backdoor decomposable
monotone circuits (BDMC) and state the main result of the paper.
BDMCs form a common generalization of
decomposable negation normal forms (DNNF) introduced by~\citet{D99} and
$\mathcal{C}$-backdoor trees introduced by~\citet{SS08} if the base
class $\mathcal{C}$ contains the literals as formulas. It consists of sentences formed by a
combination of a decomposable monotone circuit with CNF encodings from
a suitable class $\mathcal{C}$ at the leaves. More precisely, let
$\phi_i(\vek{x}_i, \vek{y}_i)$ for $i=1\ldots,\ell$ be encodings from
$\mathcal{C}$ with auxiliary variables \(\vek{y}_i\) whose main variables \(\vek{x}_i\)
are subsets of a set
of variables $\vek{x}$ and let us consider their combination by a
monotone circuit $D$ with $\ell$ inputs. This is a DAG with nodes \(V\),
root \(\rho\in V\), the set of edges \(E\), and the set of leaves
\(L\subseteq V\) of size \(\ell\). The inner nodes in $V$ are labeled
with \(\land\) or \(\lor\) and represent connectives or gates.
Each edge $(v,u)$ in \(D\) connects an inner node
$v$ labeled \(\land\) or \(\lor\) with one of its inputs $u$. The edge
is directed from $v$ to $u$, so the inputs of a node are its
successors (or child nodes).
We assume that there is a one to one correspondence between the leaves of
\(D\) and the formulas \(\varphi_i(\vek{x}_i, \vek{y}_i)\), \(i=1,
   \dots, \ell\) and we say that a leaf is labeled or associated with the
corresponding formula. For a given index \(i\in\{1, \dots, \ell\}\), the
leaf associated with \(\varphi_i\) is denoted as \(\leaf{i}\).
Given a literal \(l\in\lit{\vek{x}}\),
let us denote \(\range{l}\) the set of indices of formulas in the leaves which
contain variable \(\var{l}\), i.e.\
\(\range{l}=\{i\in\{1, \dots, \ell\}\mid \var{l} \in \vek{x}_i\}\).
Given two
different formulas $\varphi_i(\vek{x}_i, \vek{y}_i)$ and
$\varphi_j(\vek{x}_j, \vek{y}_j)$, we assume that
$\vek{y}_i\cap\vek{y}_j=\emptyset$, i.e.\ the sets of auxiliary
variables of encodings in different leaves are pairwise disjoint.

For a node \(v\in V\), let us denote \(\var{v}\) the set of main
variables from \(\vek{x}\) that appear in the leaves which can be
reached from \(v\) by a path.
In particular, \(\var{v}=\vek{x}_i\) for a leaf \(v\)
associated with \(\varphi(\vek{x}_i, \vek{y}_i)\).
We assume
that \(\var{\rho}=\vek{x}\), i.e.\ each variable \(x\in \vek{x}\) is
in some leaf.

Given a node \(v\in V\), let $f_v(\vek{x}_i)$ be the
function defined on the variables $\var{v}$ as follows.
If \(v\) is a leaf node associated with \(\varphi_i(\vek{x}_i,
   \vek{y}_i)\), then \(f_v(\vek{x}_i)\) is the function with encoding
\(\varphi_i(\vek{x}_i, \vek{y}_i)\).
If $v$ is a $\land$-node or a $\lor$-node, $f_v$ is the conjunction
or the disjunction, respectively, of the functions $f_u$ represented
by the inputs $u$ of $v$.

\begin{definition}[Backdoor Decomposable Monotone Circuit]
   \mlabel{def:bdmc} 
   Let \(\mathcal{C}\) be a base class of CNF encodings
containing every literal as a formula. A sentence in
   the language of \emph{backdoor decomposable monotone circuits with
      respect to base class \(\mathcal{C}\)}
   (\(\mathcal{C}\)-\emph{BDMC}) is a directed acyclic graph as
described above, where
   each leaf node is labeled with a CNF encoding from \(\mathcal{C}\) and
   each internal node is labeled with \(\land\) or \(\lor\) and can
   have arbitrarily many successors.
Moreover, the nodes labeled with \(\land\)
   satisfy the \emph{decomposability} property, which means that
for every $\land$-node $v=v_1\land\dots\land v_k$,
the sets of variables \(\var{v_1}, \dots, \var{v_k}\) are pairwise disjoint.
The function represented by the sentence is the function $f_\rho$
defined on the variables $\vek{x}=\var{\rho}$.
\end{definition}

We will omit prefix \(\mathcal{C}\) and write simply BDMC in case the
choice of a particular class of formulas \(\mathcal{C}\) is not
essential. The language of DNNFs is the class of those \(\mathcal{C}\)-BDMCs,
whose leaves are the literals on the input variables. Since a
decision node can be represented as a disjunction of two conjunctions
in which one of the conjuncts is a literal, we can also conclude that
\(\mathcal{C}\)-backdoor trees~\cite{SS08} form a subclass of
\(\mathcal{C}\)-BDMCs.

For the construction we describe in Section~\ref{sec:results}, we
consider the case when $\mathcal{C}$ is equal to the class of
PC encodings. This class admits a polynomial time satisfiability
test. However, \citet{BBCGK13} proved that the corresponding membership test whether a given formula
is PC is co-NP-complete. For this reason, when the
complexity of algorithms searching for a BDMC for a given function is
in consideration, a different suitable class of encodings with a polynomial time
membership test can be used, such as prime 2-CNF (which are PC) or (renamable) Horn
formulas (which are URC).

The function represented by a BDMC is described above by a recursion.
In order to relate this function to the encodings constructed later,
we describe the function using the following notion.
A \emph{minimal satisfying subtree} \(T\) of \(D\) is a rooted subtree
of \(D\) (also called out-arborescence) which has the following
properties:
\begin{itemize}
   \item The root \(\rho\) of \(D\) is also the root of \(T\).
   \item For each \(\land\)-node \(v\) in \(T\), all edges $(v,u)$ in $D$
         are in \(T\).
   \item For each \(\lor\)-node \(v\) in \(T\), exactly one of the edges $(v,u)$
         in $D$ is also in \(T\).
\end{itemize}

If \(\varphi_i(\vek{x}_i, \vek{y}_i)\) and \(\varphi_j(\vek{x}_j,
   \vek{y}_j)\) are formulas associated with two different leaves of
\(T\), then by decomposability of \(D\) we have that
\(\vek{x}_i\cap\vek{x}_j=\emptyset\).
We can observe that if \(\mathcal{T}\) denotes the set of all minimal
satisfying subtrees of \(D\), then we have
\begin{linenomath}
\begin{equation}
   \label{eq:phi-by-trees}
   f(\vek{x}) \equiv \bigvee_{T\in\mathcal{T}} \;
   \bigwedge_{\leaf{i}\in V(T)}
   (\exists \vek{y}_i) \varphi_i(\vek{x}_i, \vek{y}_i) \,.
\end{equation}
\end{linenomath}

A smooth BDMC is defined similarly to a smooth DNNF\@.

\begin{definition}[Smooth BDMC]
   \mlabel{def:smooth-bdmc}
   We say that a \(\mathcal{C}\)-BDMC \(D\) is \emph{smooth} if for
   every \(\lor\)-node \(v=v_1\lor\dots\lor v_k\) we have
   \(\var{v}=\var{v_1}=\dots=\var{v_k}\).
\end{definition}

If \(D\) is a smooth BDMC representing a function
\(f(\vek{x})\) and \(T\) is a minimal satisfying subtree of \(D\),
then for every \(x\in\vek{x}\) there is a leaf of \(T\) which is
associated with a formula \(\varphi_i(\vek{x}_i, \vek{y}_i)\)
such that \(x\in \vek{x}_i\).

The definition of a smooth BDMC restricts only the occurrences of the
main variables. The auxiliary variables of the encodings in the leaves
are local to the leaves and we assume that the sets of auxiliary
variables in the encodings associated with two differrent leaves are
disjoint.
\citet{D01a} showed that a DNNF can be transformed into a
smooth DNNF with a polynomial increase of size. The same approach can
be used to make an arbitrary \(\mathcal{C}\)-BDMC smooth.
This is one of the places, where we use the assumption
that every formula consisting of a single literal belongs to \(\mathcal{C}\).

Let us state the main result of this paper proven in
Section~\ref{sec:results} as a consequence of Theorem~\ref{thm:main}.

\begin{theorem}\label{thm:poly-translate}
   Let \(D\) be a smooth PC-BDMC representing
   a function \(f(\vek{x})\). Then we can construct in polynomial time
   a PC encoding of \(f(\vek{x})\).
\end{theorem}

Following~\citet{FM08a}, two languages \(\mathbf{L}_1\) and \(\mathbf{L}_2\) are called
\emph{polynomially equivalent} if any sentence in \(\mathbf{L}_1\) can be translated
in polynomial time into an equivalent sentence in \(\mathbf{L}_2\) and vice
versa. As a corollary of Theorem~\ref{thm:poly-translate} we get the
following.

\begin{corollary}\label{cor:poly-equiv}
   Languages of PC encodings and PC-BDMCs are polynomially equivalent.
\end{corollary}

The construction used to prove Theorem~\ref{thm:poly-translate} is described
later in Section~\ref{sec:results}.

\section{Relations to Other Target Compilation Languages}
\label{sec:relations}

Let us first recall the notion of succinctness introduced
by~\citet{GKPS95} and used later extensively by~\citet{DM02}.

\begin{definition}[Succinctness]
   Let \(\mathbf{L}_1\) and \(\mathbf{L}_2\) be two representation
   languages. We say that \(\mathbf{L}_1\) is \emph{at least as succinct as}
   \(\mathbf{L}_2\), iff there exists a polynomial \(p\) such that for
   every sentence \(\varphi\in\mathbf{L}_2\), there exists an
   equivalent sentence \(\psi\in\mathbf{L}_1\) where \(|\psi|\leq
      p(|\varphi|)\).
   We  say that \(\mathbf{L}_1\) is \emph{strictly more succinct} than
   \(\mathbf{L}_2\) if \(\mathbf{L}_1\) is at least as succinct as
   \(\mathbf{L}_2\) but \(\mathbf{L}_2\) is not at least as succinct
   as \(\mathbf{L}_1\).
\end{definition}

Corollary~\ref{cor:poly-equiv} implies that the languages of PC
encodings and PC-BDMCs are equally succinct. 
\citet{BCMS14,BCMS16} show examples of monotone CNF
formulas which have only exponentially bigger DNNFs. Given the fact
that every monotone CNF formula is PC, we get that PC encodings and
PC-BDMCs are strictly more succinct than DNNFs.

Let us relate PC-BDMCs and PC encodings to backdoor
trees introduced by~\citet{SS08} when we consider PC formulas as a base
class. Backdoor trees were introduced in the context of parameterized
SAT solving as an auxiliary data structure that allows to make SAT
solving of a given CNF formula easy. Given a base
class \(\mathcal{C}\), a \(\mathcal{C}\) backdoor tree \(T\) for a CNF
formula \(\varphi\) is defined as a decision tree on some of the variables
in \(\varphi\) which satisfies the following property: If \(\alpha\)
is a partial assignment specified by a path from the root of \(T\) to
a leaf, then \(\varphi(\alpha)\) (i.e., formula \(\varphi\) after we
apply partial assignment \(\alpha\)) belongs to class \(\mathcal{C}\).
In particular, all the formulas in the leaves are restrictions of the
same original formula. For proving a lower bound on the size
of a PC backdoor tree,
we remove this assumption, so we only require that the formula in
every leaf represents the restriction of the original function
according to the assignments on the path from the root to the leaf.
We will call this structure
\emph{generalized $\mathcal{C}$ backdoor tree}
and it is precisely the subclass of $\mathcal{C}$-BDMCs that satisfy
that the only gates allowed in the circuit part are decision gates
(disjunctions of two conjunctions), the directed graph in the circuit
part is an out-arborescence, and leaves are associated with
$\mathcal{C}$ encodings without auxiliary variables.

Let us point out another generalization of backdoor trees within the
framework of BDMCs obtained by including decomposable conjunctions.
This is a model that appears as an intermediate state in several
CNF to DNNF compilers, whose final output is
a Decision DNNF \cite[see e.g.,][]{LM17}.
From the compilation perspective, it thus makes sense
to consider a variant of BDMCs which only allows conjunctions and decision
nodes as inner nodes. One of the variants of the
lower bound below separates generalized PC backdoor trees with
decomposable conjunctions from generalized PC backdoor trees.

In this section, we present a family of boolean functions
which have PC-BDMCs of
polynomial size, but any generalized PC backdoor tree has
exponential size. For this purpose, we measure the sizes of
generalized PC
backdoor trees and PC-BDMCs in the same way, namely, we sum the sizes
of the formulas associated with the leaves with the number of the
edges in the circuit part.

For a given \(n\), let us define formula \(\psi_n'\) on \(2n\) variables
\(y_1, \dots, y_n, z_1, \dots, z_n\)
as follows.
\begin{linenomath}
\begin{equation*}
   \psi_n'=(\neg z_1\lor\dots\lor\neg z_n)\land\bigwedge_{i=1}^n(\neg
   y_i\lor z_i)
\end{equation*}
\end{linenomath}
It can be checked that \(\psi_n'\) has \(2^n\) implicates of form
\(C_I=\bigvee_{i\in I}\neg z_i\lor\bigvee_{i\not\in I}\neg y_i\) for every set
of indices \(I\subseteq\{1, \dots, n\}\) in addition to \(n\) prime
implicates \(\neg y_i\lor z_i\), \(i=1, \dots, n\).
Clause \(C_I\) is an implicate of \(\psi_n'\), because it can be
produced by resolving clause \(\neg z_1\lor ... \lor \neg z_n\)
with \(\neg y_i\vee z_i\) for \(i \not\in I\).
\citet[][Section 4.1]{KS20} argued
that formula
\begin{linenomath}
\begin{equation}
   \label{eq:psi}
\psi_n=\bigwedge_{C\in\psi'_n}(x\lor C)
\end{equation}
\end{linenomath}
where $x$ is a new variable has only one prime PC
representation which is the list of all its \(2^n+n\) prime
implicates. We use this property to show the
main result of this section. 

\begin{theorem}
   \label{thm:bdtree}
   PC-BDMCs (and thus also PC encodings) are strictly more succinct
   than generalized PC backdoor trees.
\end{theorem}
\begin{proof}
   For a given \(n\), let us consider three sets of variables: \(\vek{x}=\{x_{i,j}\mid i,
      j\in\{1, \dots, n\}\}\), \(\vek{y}=\{y_{i,j,k}\mid i,
      j\in\{1, \dots, n\}, k\in\{1, \dots, n-1\}\}\), and \(\vek{z}=\{z_{i,j}\mid i,
      j\in\{1, \dots, n\}\}\). For every \(i=1, \dots, n\), we
   introduce formulas
\begin{linenomath}
   \begin{align*}
      \gamma_i&=(\neg x_{i,1}\lor\dots\lor\neg x_{i,n})\\
      \delta_i&=(\neg z_{i,1}\lor\dots\lor\neg z_{i,n})\\
      &\land\bigwedge_{s=1}^n(\neg y_{i,s,1}\lor\dots\lor\neg y_{i, s, n-1}\lor z_{i, s})
   \end{align*}
   \end{linenomath}
   Let \(f(\vek{x}, \vek{y}, \vek{z})\) be the function represented by
   the disjunction of formulas
   \(\Gamma=\bigwedge_{i=1}^n\gamma_i\) and
   \(\Delta=\bigwedge_{j=1}^n\delta_j\).
   Since \(\Gamma\) is a conjunction of clauses on pairwise disjoint
   sets of variables, it is immediate that \(\Gamma\) is PC\@. Formulas
   \(\delta_i\), \(i=1, \dots, n\) are PC because each can be
   constructed by subsequently taking conjunction of PC formulas which share a
   single variable. This leads to a PC formula as shown
   by~\citet[][proof of Proposition 5]{BM12}.
   It follows that \(f\) has a small PC-BDMC
   in form of the disjunction of
   \(\Gamma\) and \(\Delta\). Let \(T\) be any generalized PC backdoor tree
   for \(f\) which
   has leaves associated with PC formulas of size less than $2^n$.
   We claim that every leaf in \(T\) has depth at least
   \(n\) and thus the number of leaves of \(T\) is at least \(2^n\).

   Let
   us consider any partial assignment
   \(\alpha\subseteq\lit{\vek{x}\cup\vek{y}\cup\vek{z}}\) of size \(n-1\).
   Since all prime implicates of \(\gamma_i\) and of \(\delta_j\) for all
   indices \(i\), \(j\) have length at least \(n\), we get that
   formulas \(\gamma_i(\alpha)\) and \(\delta_j(\alpha)\) are
   satisfiable. Moreover,
   there is a pair of indices \(p\), \(q\), such that
   \(\alpha\) does not assign a value to any variable in
   \(\gamma_{p}\) and \(\delta_{q}\).
Partial assignment \(\alpha\) can be
   extended to a partial assignment \(\alpha'\) such that
   \(f(\alpha')\equiv \gamma_{p}\lor\delta_{q}\). This is possible,
since $\gamma_i(\alpha)$, $\delta_j(\alpha)$ are pairwise independent and satisfiable.
Let us further extend \(\alpha'\) to satisfy all
   variables \(x_{p,1}, \dots, x_{p,n-1}\) and variables \(y_{q,s,t}\)
   for \(s=1, \dots, n\) and \(t=1, \dots, n-2\). 
In this way, we obtain a restriction of $f$ represented by the formula
\begin{linenomath}
   \[
      (\neg x_{p,n}\lor\neg z_{q,1}\lor\dots\lor\neg z_{q,n})
      \land\bigwedge_{s=1}^n(\neg x_{p,n}\lor\neg
      y_{q,s,n-1}\lor z_{q,s})
   \]
   \end{linenomath}
   which has the same structure as \(\psi_n\) defined
   by~\eqref{eq:psi}.
This is a contradiction with the choice of $\alpha$, since
any PC formula equivalent to~\eqref{eq:psi} has size at least \(2^n\).
It follows that every leaf in \(T\) has depth at least \(n\).
\end{proof}

The separation presented in Theorem~\ref{thm:bdtree} is based on
a function defined by a disjunction of suitably chosen formulas
which is not allowed in generalized backdoor trees.
A similar separation could be achieved with a
function \(g\) defined by a conjunction, namely,
by the formula \(\bigwedge_{i=1}^n(\gamma_i\lor\delta'_i)\)
where \(\delta'_i=(\neg z_{i,1}\lor\dots\lor\neg
   z_{i,n})\land\bigwedge_{j=1}^n(\neg y_{i,1}\lor z_{i,j})\).

\section{Queries and Transformations}

In this section we shall look at PC encodings and PC-BDMCs as target
compilation languages. We shall demonstrate that both these languages
have the same properties as DNNFs when it comes to answering queries and
transformations described by~\citet{DM02}.

Let us first look at queries. \citet{DM02} consider \textbf{CO}
(consistency), \textbf{VA} (validity), \textbf{CE} (clausal
entailment), \textbf{EQ} (equivalence), \textbf{SE} (sentential
entailment), \textbf{IM} (implicant), \textbf{CT} (model counting),
and \textbf{ME} (model enumeration). Out of these, \textbf{CO},
\textbf{CE} and \textbf{ME} can be done in polynomial time on a DNNF,
the remaining ones cannot be performed in polynomial time unless P is
equal to NP\@. Since DNNFs form a special case of PC-BDMCs, we have
that also for PC encodings and PC-BDMCs, answering queries
\textbf{VA}, \textbf{EQ}, \textbf{SE}, \textbf{IM}, and \textbf{CT} is
hard. Consistency checking \textbf{CO} and clausal entailment
\textbf{CE} can be performed on a PC encoding by unit propagation.
PC encodings also satisfy \textbf{ME}, because the models of a PC
encoding can be enumerated with polynomial delay by a simple backtrack
procedure considering the fact that PC encodings are closed under the
application of a partial assignment. 
In particular, to enumerate the models of a PC encoding \(\varphi\), first
check if it is satisfiable and if so, pick an unassigned variable of
\(\varphi\) and recursively enumerate the models of \(\varphi(x)\) and
\(\varphi(\neg x)\) which originate from \(\varphi\) by satisfying
literals \(x\) and \(\neg x\) respectively.
By Corollary~\ref{cor:poly-equiv} we have that PC-BDMCs and PC encodings are polynomially equivalent and thus
they have
the same properties with respect to query answering.

\citet{DM02} consider the following transformations on compilation
languages: \textbf{CD} (conditioning), \textbf{FO} (forgetting),
\textbf{SFO} (singleton forgetting), \textbf{\(\land\)C}
(conjunction), \textbf{\(\land\)BC} (bounded conjunction),
\textbf{\(\lor\)C} (disjunction), \textbf{\(\lor\)BC} (bounded
disjunction), and \textbf{\(\neg\)C} (negation). Unless P is equal to
NP, DNNFs do not allow polytime \textbf{\(\land\)C},
\textbf{\(\land\)BC}, and \textbf{\(\neg\)C} and since DNNFs are a
special case of PC-BDMCs, this is also the case for PC-BDMCs. It is
well-known that these operations are not efficient also for PC
encodings under the same assumption. The rest of transformations can
be done in polynomial time on DNNFs. Let us look at these
transformations on PC-BDMCs and PC encodings. \textbf{CD} can be done
in polynomial time on PC encodings since partial assignment preserves
propagation completeness. Both \textbf{FO} and \textbf{SFO} are
trivial on a PC encoding, we just move the variables to be forgotten
from the set of main variables to the set of auxiliary variables. Note
that PC encodings also allow to forget a single auxiliary variable by
means of Davis Putnam resolution. Both \textbf{\(\lor\)C} and
\textbf{\(\lor\)BC} can be done on PC-BDMCs in the same way as on
DNNFs, just connect the roots of the input PC-BDMCs with a disjunction
gate. This transformation is not so trivial on PC encodings. By
Theorem~\ref{thm:poly-translate} we have that a PC-BDMC can be
translated into a PC encoding in polynomial time and thus a
disjunction of PC encodings can be transformed back into a PC
encoding.

\section{Extended Implicational Dual Rail Encoding}
\label{sec:dual-rail} 

We use the well-known dual rail encoding of partial
assignments~\cite[see e.g.,][]{BBIMM18,BBBCS87,IMM17,MFSO97,MIBMB19}
to simulate unit propagation in a general CNF formula in the same way
as~\citet{BKNW09,BJM12,KS20}. We use the form of the encoding with
a special variable representing the fact that contradiction was not derived
and extend it with clauses
which make the encoding propagation complete if the input CNF formula is PC\@.

Let us introduce for
every $l \in \lit{\vek{x}}$ a \emph{meta-variable} $\litVar{l}$.
In addition, we use special meta-variable $\litVar{\top}$
intended to represent the value of a formula
in a way suitable for propagating into the circuit part of BDMC\@.
For this purpose, we implement deriving a contradiction
as deriving the negative literal $\neg \litVar{\top}$.
The set of the meta-variables
corresponding to a vector of variables $\vek{x}$ will be denoted
\begin{linenomath}
\begin{equation*}
   \meta{\vek{x}}=\{\litVar{l}\mid l\in\lit{\vek{x}} \cup
   \{\top\}\}\,.
\end{equation*}
\end{linenomath}

For notational convenience, we extend this notation also to sets
of literals that are meant as a conjunction, especially to partial
assignments. If \(\alpha\subseteq\lit{\vek{x}}\) is a set of literals,
then \(\litVar{\alpha}=\{\litVar{l}\mid l\in\alpha\}\) denotes the set of meta-variables associated
with the literals in \(\alpha\). If \(\litVar{\alpha}\) is used in a formula such as
\(\psi\land\litVar{\alpha}\), we identify this set of literals
with the conjunction of them, similarly to the interpretation of
\(\alpha\) in \(\phi\land\alpha\).

\begin{definition}[Extended implicational dual rail encoding]\mlabel{def:drenc}
   Let \(\varphi(\vek{x})\) be an arbitrary CNF formula. The
   \emph{extended implicational dual rail encoding} of \(\varphi\) is a formula
   on meta-variables \(\vek{z}=\meta{\vek{x}}\)
   denoted \(\xdrenc{\varphi}{\vek{z}}\) and defined as follows.
If $\phi$ contains the empty clause, then
$\xdrenc{\varphi}{\vek{z}}=\neg\litVar{\top}$.
Otherwise, we set
\begin{linenomath}
   \begin{equation}
      \label{eq:def-drenc-impl}
      \begin{aligned}
      \xdrenc{\varphi}{\vek{z}}&=\bigwedge_{C\in\varphi}\bigwedge_{l\in
         C}\left(\bigwedge_{e\in C\setminus\{l\}}\litVar{\neg
            e}\to\litVar{l}\right)\\
     & \land\bigwedge_{x\in\vek{x}}(\litVar{x}\land\litVar{\neg
         x}\to
      \neg\litVar{\top})\\
      &\land
\bigwedge_{l\in\lit{\vek{x}}}\left(\litVar{\top} \vee \litVar{l}\right)
\wedge
\bigwedge_{x\in \vek{x}}\left(\litVar{x}\vee \litVar{\neg x}\right)\,.
   \end{aligned}
   \end{equation}
\end{linenomath}
\end{definition}

A subset of extended implicational dual rail encoding is used in
the first part of the proof of Theorem 1{} by~\citet{BKNW09} for a
similar purpose as in this paper. A similar encoding is used also
by~\citet{BJM12} as a part of a larger formula. The proof of the
following lemma is omitted, since it is a straightforward extension
of the properties of the encodings used in the two papers cited above.

\begin{lemma}\mlabel{lem:dr-enc}
   Let \(\varphi(\vek{x})\) be a CNF formula not containing the empty clause,
   let \(\alpha\subseteq\lit{\vek{x}}\), and assume
   \(\vek{z}=\meta{\vek{x}}\). Then
   \begin{linenomath}
   \begin{equation*}
      \varphi\land \alpha\vdash_1 \bot
      \iff
      \xdrenc{\varphi}{\vek{z}}\land\litVar{\alpha}\vdash_1 \neg\litVar{\top}
   \end{equation*}
\end{linenomath}
and if \(\varphi\land\alpha\not\vdash_1\bot\), then for every \(l\in\lit{\vek{x}}\) we have
\begin{linenomath}
   \begin{equation*}
      \varphi\land \alpha\vdash_1 l
      \iff
      \xdrenc{\varphi}{\vek{z}}\land\litVar{\alpha}\vdash_1 \litVar{l}\,.
   \end{equation*}
\end{linenomath}
\end{lemma}

The clauses of form \(\litVar{\top} \vee \litVar{l}\)
imply that once \(\neg\litVar{\top}\) is derived by unit propagation,
which rules out the possibility that \(\varphi\) is consistent with a
given partial assignment, all
meta-variables \(\litVar{l}\), \(l\in\lit{\vek{x}}\) are derived as
well. The clauses $\litVar{x}\vee \litVar{\neg x}$ guarantee that
every satisfying assignment can be extended to a satisfying assignment
of a formula obtained by adding the clauses \(l\to\litVar{l}\), \(l\in\lit{\vek{x}}\).
We shall show that extended implicational dual rail encoding of a PC
formula is PC as well. We will need the following auxiliary lemma.

\begin{lemma}
   \mlabel{lem:xdrenc-models}
Let \(\varphi(\vek{x})\) be a PC formula, $\beta \subseteq \lit{\vek{x}}$
and $\beta' = \litVar{\beta} \cup \{\neg \litVar{\neg l} \mid l \in \beta\}$.
If $\varphi\land\beta\not\vdash_1\bot$,
then $\xdrenc{\varphi}{\meta{\vek{x}}}\land\beta' \wedge \litVar{\top}$
is satisfiable.
\end{lemma}

\begin{proof}
Since $\varphi$ is PC, there is a satisfying assignment
\(\assign{b}\) of $\phi(\vek{x}) \wedge \beta$.
Define an assignment $\assign{b'}$ as $\assign{b'}(\litVar{l}) = \assign{b}(l)$
for all $l \in \lit{\vek{x}}$
and $\assign{b'}(\litVar{\top}) = 1$. One can verify that this assignment satisfies
$\xdrenc{\varphi}{\meta{\vek{x}}} \wedge \beta' \wedge \litVar{\top}$.
\end{proof}

We are now ready to show that the extended implicational dual rail
encoding of a PC fomula is itself PC\@.

\begin{lemma}\mlabel{lem:xdrenc-urc-pc}
   If \(\varphi(\vek{x})\) is a PC formula, then
   \(\xdrenc{\varphi}{\meta{\vek{x}}}\) is a PC formula.
\end{lemma}
\begin{proof}
   The proposition is trivially true if \(\varphi\) contains the empty
   clause, so let us assume this is not the case.
   Let \(\alpha\subseteq\lit{\meta{\vek{x}}}\) be a partial assignment and
   let us assume that \(\xdrenc{\varphi}{\meta{\vek{x}}}\land\alpha\not\vdash_1\bot\).
   Assume moreover that \(\alpha\) is closed under unit propagation in
   \(\xdrenc{\varphi}{\meta{\vek{x}}}\) and consider a literal
\(l\in\lit{\meta{\vek{x}}}\) such that \(\neg l\not\in\alpha\) or, equivalently,
$\xdrenc{\varphi}{\meta{\vek{x}}} \wedge \alpha \not\vdash_1 \neg l$.
We show that then \(\xdrenc{\varphi}{\meta{\vek{x}}}\land\alpha\land l\)
is satisfiable.

If \(l=\neg\litVar{\top}\), then $\litVar{\top}\not\in \alpha$
by assumption.
   Due to clauses \(\litVar{\top}\vee\litVar{g}\)
   in \(\xdrenc{\varphi}{\meta{\vek{x}}}\), we get
   \(\neg\litVar{g}\not\in\alpha\) for every \(g\in\lit{\vek{x}}\). It
   follows that the assignment which sets the value of
   \(\litVar{\top}\) to \(0\) and the values of all remaining variables in
   \(\meta{\vek{x}}\) to \(1\) satisfies
   all clauses of
   \(\xdrenc{\varphi}{\meta{\vek{x}}}\land\alpha\land\neg\litVar{\top}\). 

   If \(l=\litVar{\top}\), then \(\neg\litVar{\top}\not\in\alpha\) by assumption.
Let \(\beta\subseteq\lit{\vek{x}}\) be defined as
\(\beta=\{l \in \lit{\vek{x}} \mid \litVar{l}\in\alpha\}\).
Since $\litVar{\beta} \subseteq \alpha$, we have, by assumption,
\(\xdrenc{\varphi}{\meta{\vek{x}}}\land\litVar{\beta}\not\vdash_1\neg\litVar{\top}\) and thus by Lemma~\ref{lem:dr-enc}
we have \(\varphi\land\beta\not\vdash_1\bot\).
Using Lemma~\ref{lem:xdrenc-models} and the definition of $\beta'$ in it, we obtain that
$\xdrenc{\varphi}{\meta{\vek{x}}} \wedge \beta' \wedge \litVar{\top}$
is satisfiable.
Since $\mathrm{DR}^+$ contains the clauses \(\litVar{x}\lor\litVar{\neg x}\),
we have for every literal \(g\in\lit{\vek{x}}\)
that if \(\neg\litVar{g}\in\alpha\), then \(\litVar{\neg g}\in\alpha\),
$\neg g \in \beta$, and $\neg\litVar{g}\in\beta'$. Together with the
assumption $\neg \litVar{\top} \not\in \alpha$,
we obtain $\alpha \subseteq \beta' \wedge \litVar{\top}$ implying that
$\xdrenc{\varphi}{\meta{\vek{x}}} \wedge \alpha \wedge \litVar{\top}$
is satisfiable.

Assume \(l=\litVar{g}\) where \(g\in\lit{\vek{x}}\), thus
   $\neg\litVar{g} \not\in \alpha$ by assumption.
If $\litVar{\top} \not\in \alpha$, the satisfying assignment described
above for $l = \neg \litVar{\top}$ satisfies $\litVar{g}$ and $\alpha$.
If \(\litVar{\top}\in\alpha\), then $\neg\litVar{g} \not\in
   \alpha$ implies $\litVar{\neg g} \not\in \alpha$ and we also
   have $\neg\litVar{\top} \not\in \alpha$.
If $\beta$ is defined as above, we have $\litVar{\beta} \subseteq \alpha$ and, hence,
\(\xdrenc{\varphi}{\meta{\vek{x}}}\land\litVar{\beta}\not\vdash_1 \litVar{\neg g}\).
Since \(\xdrenc{\varphi}{\meta{\vek{x}}}\) contains clause
\(\litVar{\top}\lor\litVar{\neg g}\), we have that 
\(\xdrenc{\varphi}{\meta{\vek{x}}}\land\litVar{\beta}\not\vdash_1 \neg \litVar{\top}\).
Lemma~\ref{lem:dr-enc} implies
\(\varphi(\vek{x})\land\beta\not\vdash_1\bot\) and
\(\varphi(\vek{x})\land\beta\not\vdash_1 \neg g\). Since
\(\varphi\) is PC, we have that \(\varphi\land\beta\land
   g\not\vdash_1\bot\).
It follows by Lemma~\ref{lem:xdrenc-models} used on partial assignment
\(\beta\land g\) that the formula
$\xdrenc{\varphi}{\meta{\vek{x}}}\land\beta' \wedge
\litVar{g} \wedge \litVar{\top}$ is satisfiable.
It follows that
$\xdrenc{\varphi}{\meta{\vek{x}}}\land\alpha \wedge \litVar{g}$
is satisfiable, since we have
$\alpha \subseteq \beta' \wedge \litVar{\top}$ as in the
previous case.

   At last, assume
   \(l=\neg\litVar{\neg g}\) where \(g\in\lit{\vek{x}}\), thus
   $\litVar{\neg g} \not\in \alpha$ by assumption.
   It follows that $\neg\litVar{\top} \not\in \alpha$, since
   otherwise \(\litVar{\neg g}\) would be derived using the clause
   \(\litVar{\top}\vee\litVar{\neg g}\). If we define \(\beta\) as above,
   then we can use Lemma~\ref{lem:dr-enc} and the propagation
   completeness of \(\varphi\) as in the case \(l=\litVar{g}\) to conclude
   \(\varphi\land\beta\land g\not\vdash_1\bot\). Using
   Lemma~\ref{lem:xdrenc-models} on partial assignment \(\beta\land g\)
we get that
$\xdrenc{\varphi}{\meta{\vek{x}}}\land\beta' \wedge
\neg \litVar{\neg g} \wedge \litVar{\top}$ is satisfiable.
It follows that
$\xdrenc{\varphi}{\meta{\vek{x}}}\land\alpha \wedge \neg \litVar{\neg g}$
is satisfiable, since we have
$\alpha \subseteq \beta' \wedge \litVar{\top}$ as above.
\end{proof}

We use extended implicational dual rail encodings of the formulas in
all the leaves as part
of the encoding of a BDMC\@. In order to make the propagation in them
independent, the extended implicational dual rail encoding in $\leaf{i}$ uses
the following set of meta-variables with indices in place of $\vek{z}$
\begin{linenomath}
\begin{equation*}
   \metaI{i}{\vek{x}}=\{\metaVar{i}{l}\mid l\in\lit{\vek{x}} \cup \{\top\}\}\,.
\end{equation*}
\end{linenomath}
This is the reason for including the set of variables
\(\vek{z}\) as part of the notation \(\xdrenc{\varphi}{\vek{z}}\). 
Encodings \(\xdrenc{\varphi_i}{\vek{z}_i}\), \(i=1, \dots, \ell\) are
connected to the remaining parts by identifying the variable representing
leaf node \(\leaf{i}\) with the variable $\metaVar{i}{\top}$ and the
consistency with the input variables is guaranteed by adding the clauses
$l \to \metaVar{i}{l}$ for all $l \in \lit{\vek{x}_i}$~\citep[see also][]{BJM12}.
Note that these clauses
imply $\metaVar{i}{x} \vee \metaVar{i}{\neg x}$ for each $x \in \vek{x}_i$.
The implied clauses are explicitly present in
\(\xdrenc{\varphi_i}{\vek{z}_i}\)
 also for $x \in \vek{y}_i$.

\section{PC Encoding of a PC-BDMC}
\label{sec:results}

In this section we present the encoding promised in the introduction.
To this end, let us fix a PC-BDMC \(D\) which represents a function \(f(\vek{x})\) on
variables \(\vek{x}=(x_1, \dots, x_n)\). Let \(V\) denote the set of
nodes in \(D\) and let \(\rho\) denote the root of \(D\). Let us
assume that \(D\) has \(\ell\) leaves which are labeled with PC
encodings \(\varphi_i(\vek{x}_i, \vek{y}_i)\), \(i=1, \dots, \ell\).

\subsection{Separators}
\label{ssec:separators}

The construction of the PC encoding relies on the
notion of separators introduced by~\citet{KS19} and briefly described below.
Construction of separators can require to include
a polynomial number of new nodes into $D$.
For every \(i=1, \dots, n\), let us denote \(H_i\) the set of
nodes \(v\) of \(D\) for which \(x_i\in\var{v}\). Let \(D_i\) be the
subgraph of \(D\) induced by the vertices in \(H_i\). We say that a
set of nodes \(S\subseteq H_i\) is a \emph{separator} in \(D_i\), if
every path in \(D_i\) from the root to a leaf contains precisely one
node from \(S\). We say that \(D\) can be \emph{covered by
   separators}, if for each \(i=1, \dots, n\) there is a collection of
separators \(\mathcal{S}_i\) in \(D_i\), such that the union of
\(S\in\mathcal{S}_i\) is \(H_i\). Not every BDMC can be covered by
separators, but every BDMC can be modified in polynomial time into an
equivalent one which can be covered by separators~\citep[for more
details, see][]{KS19}.

For the purpose of the construction, let us assume that \(D\) is a smooth BDMC covered by separators
and let us denote \(\mathcal{S}_i\) a set of separators which covers
\(D_i\). Let us further denote
\(\mathcal{S}=\bigcup_{i=1}^n\mathcal{S}_i\) the set of all separators
considered for covering \(D\). It follows by smoothness of \(D\) that if
\(T\) is a minimal satisfying subtree of \(D\), then for every \(i=1,
   \dots, n\), the intersection of \(T\) and \(D_i\) is a path
from the root \(\rho\) to a leaf \(v\) in \(D_i\), i.e.\ a leaf \(v\)
satisfying \(x_i\in\var{v}\). It follows that \(T\) must intersect each
separator \(S\in\mathcal{S}\) in exactly one node.

\subsection{Encoding}

We use the extended implicational dual rail encodings $\xdrenc{\phi_i}{\vek{z}_i}$
of the formulas $\varphi_i(\vek{x}_i, \vek{y}_i)$ associated with
the leaves of \(D\), where \(\vek{z}_i = \metaI{i}{\vek{x}_i \cup
      \vek{y}_i}\) is the set of meta-variables specific to the
formula $\phi_i$. The (disjoint) union of these sets of variables is denoted
\(\vek{z}=\bigcup_{i=1}^\ell \vek{z}_i\).

We associate a variable \(v\)
with every node \(v\in V\). For a leaf \(v\in V\) labeled with
\(\varphi_i(\vek{x}_i, \vek{y}_i)\) for some \(i\in \{1, \dots, \ell\}\),
the variable \(v\) is identified with \(\metaVar{i}{\top}\). This identification
is understood as follows. The variable actually used in the encoding
is $\metaVar{i}{\top}$. The variable \(v\) corresponding to the leaf is used
for simplicity if we refer to its role within the monotone circuit part
of $D$ while \(\metaVar{i}{\top}\) is used when referring to the role of the
leaf within the extended implicational dual-rail encoding of
\(\varphi_i(\vek{x}_i, \vek{y}_i)\) which is a part of the encoding.

The set of variables associated with the inner
nodes of \(D\) (i.e.\ the \(\land\)-nodes and \(\lor\)-nodes)
will be denoted \(\vek{v}\). As explained above, a leaf is represented by a variable
$\metaVar{i}{\top}$ which belongs to $\vek{z}$.
Altogether, the encoding described in this section uses three
kinds of variables: \(\vek{x}\), \(\vek{z}\), and \(\vek{v}\).

\begin{table*}[t]
   \centering
      \setlength{\tabcolsep}{12pt}
      \begin{tabular}{l l l}
         \toprule
         \multicolumn{1}{c}{group} & clause & condition\\
         \midrule
         \manuallabel{N1} 
         & \(v\to v_1\vee\dots\vee v_k\) &\(v=v_1\vee\dots\vee v_k\)\\
         \manuallabel{N2} 
         & \(v\to v_i\) &\(v=v_1\land\dots\land v_k\), \(i=1, \dots, k\)\\
         \manuallabel{N3} 
         & \(v\to p_1\vee\dots\vee p_k\) &\(v\in V\) has incoming edges from
         \(p_1, \dots, p_k\)\\
         \manuallabel{N6} 
         & \(\exonerep{S}\) & $S \in \mathcal{S}$\\
	 \manuallabel{R} & $\rho$ & $\rho$ \mbox{\ is the root}\\
         \manuallabel{E1} 
         & \(\xdrenc{\varphi_i}{\vek{z}_i}\) & \(i=1, \dots, \ell\),
           $\vek{z}_i=\metaI{i}{\vek{x}_i \cup \vek{y}_i}$\\
         \manuallabel{E2} 
         & \(l\to\metaVar{i}{l}\) & \(l\in\lit{\vek{x}_i}\), \(i=1,
            \dots, \ell\)\\
         \manuallabel{E3} 
         & \(\left(\bigwedge_{i\in\range{l}}\metaVar{i}{l}\right)\to l\)&\(l\in\lit{\vek{x}}\)\\
         \bottomrule
      \end{tabular}
   \caption{List of clauses of the encoding \(\encPC(\vek{x}, \vek{z}, \vek{v})\).
      \(\mathcal{S}=\bigcup_{i=1}^n\mathcal{S}_i\) denotes a fixed collection of separators 
      which covers \(D\) and
 \(\exonerep{S}\) denotes a suitable PC encoding of the exactly-one constraint on the variables in \(S\).
      By construction, we have \(v=\metaVar{i}{\top}\)
      for a leaf node \(v\) associated with formula
      \(\varphi_i(\vek{x}_i, \vek{y}_i)\).}\label{tab:clauses}
\end{table*}

Consider the list of clauses in Table~\ref{tab:clauses}. 
Clauses~\ref{N1}--\ref{N3} are the same as introduced by~\citet{AGMES16}
and~\cite{KS19}. Clauses~\ref{N6} are the same as introduced by~\citet{KS19} and a
special case of them was also used by~\citet{AGMES16}. Clauses N4 used in
the cited constructions are not needed for encodings of BDMC\@. We use
exactly-one constraints in group~\ref{N6}, similarly to~\citet{KS19},
if we would use the at-most-one constraints instead, we would obtain a
URC encoding provided the encodings associated with leaves are URC\@.
Let us point out that group~\ref{N6} contains a redundant unit clause
$\rho$, since one of the separators in ${\cal S}$ is $\{\rho\}$.

Let us denote \(\encPC(\vek{x}, \vek{z}, \vek{v})\) the encoding
consisting of the clauses in Table~\ref{tab:clauses}. The encoding is
clearly constructible in polynomial time and in particular, it has a
polynomial size as well. The constructed encoding proves
Theorem~\ref{thm:poly-translate} due to the following.

\begin{theorem}\mlabel{thm:main}
   Let \(D\) be a smooth PC-BDMC covered by separators and representing
   a function \(f(\vek{x})\).
   Then \(\encPC(\vek{x}, \vek{z}, \vek{v})\) is a PC
   encoding of \(f(\vek{x})\).
\end{theorem}

\begin{proof}
Combine Proposition \ref{prop:correctness} and Proposition \ref{prop:main-PC}
below.
\end{proof}

Let us close the section with an asymptotic estimate of the size of
encoding \(\encPC\). We will assume that the exactly one constraints
(clauses \ref{N6}) are represented with a linear size encoding
described by~\citet{KS19}. The encoding then contains \(O(ns+m)\)
variables and \(O(ns+e+r)\) clauses where \(n\) denotes the number of
input variables, \(s\) the number of nodes of \(D\), \(e\) the number
of edges of \(D\), \(m\) the total number of variables in the formulas
associated with the leaves, and \(r\) the total length of these
formulas (the sum of the lengths of all the clauses).

\subsection{Correctness of the encoding}
\label{sec:subtree} 

In this section we show that the encoding \(\encPC(\vek{x}, \vek{z}, \vek{v})\)
introduced in Section~\ref{sec:results} is correct in that it is a CNF encoding
of the function \(f(\vek{x})\) represented by a given BDMC \(D\).
One of the needed implications is proven in the following.

\begin{lemma}
   \mlabel{lem:correct-1}
   Let \(D\) be a smooth BDMC covered by separators
   which represents a function \(f(\vek{x})\). Let
   \(\assign{a}\) be a model of this function. Then there are
   assignments \(\assign{b}\) and \(\assign{c}\) of values to
   \(\vek{z}\) and \(\vek{v}\) respectively that satisfy all
   clauses introduced in Table~\ref{tab:clauses}.
\end{lemma}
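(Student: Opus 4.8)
The plan is to construct the witnessing assignments from a \emph{minimal satisfying subtree} and then check each group of clauses in Table~\ref{tab:clauses} in turn. Since \(\assign{a}\) is a model of \(f\), Lemmas~\ref{lem:sat-subgraph} and~\ref{lem:bdmc-sat-subtree} give a minimal satisfying subtree \(T\) of \(D\) that is consistent with \(\assign{a}\). I would define \(\assign{c}\) on the inner-node variables by \(\assign{c}(v)=1\) iff \(v\in V(T)\), and define \(\assign{b}\) on \(\vek{z}\) leaf by leaf. For a leaf \(\leaf{i}\in V(T)\), consistency of \(T\) with \(\assign{a}\) means \(\varphi_i\) is satisfiable under \(\assign{a}\) restricted to \(\vek{x}_i\); I fix a satisfying assignment \(\assign{d}_i\) of \(\varphi_i\) extending this restriction and set \(\metaVar{i}{l}=1\) exactly for the literals \(l\in\lit{\vek{x}_i\cup\vek{y}_i}\) true under \(\assign{d}_i\), together with \(\metaVar{i}{\bot}=0\). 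For a leaf \(\leaf{i}\notin V(T)\), I set every meta-variable of \(\vek{z}_i\), including \(\metaVar{i}{\bot}\), to \(1\). Under the identification \(v=\neg\metaVar{i}{\bot}\), the value of each leaf variable then coincides with its membership in \(T\), so \(\assign{c}\) is consistently defined on inner nodes and leaves at once.

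Next I would verify the leaf clauses \ref{E1}--\ref{E3}. For \ref{E1}, a leaf with \(\metaVar{i}{\bot}=0\) is satisfied because \(\assign{d}_i\) assigns each variable, so each clause \(\litVar{x}\vee\litVar{\neg x}\) holds, each implication coming from a clause \(C\) of \(\varphi_i\) holds since \(\assign{d}_i\) satisfies \(C\), and every \(\litVar{\bot}\to\litVar{l}\) is vacuous; a leaf with all meta-variables equal to \(1\) trivially satisfies \(\xdrenc{\varphi_i}{\vek{z}_i}\). For \ref{E2}, \(l\to\metaVar{i}{l}\) holds because on input variables \(\assign{d}_i\) agrees with \(\assign{a}\) (and the conclusion is anyway true for inactive leaves). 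For \ref{E3}, using smoothness and the observation after Definition~\ref{def:smooth-bdmc} that a minimal satisfying subtree of a smooth BDMC contains, for each \(x\in\vek{x}\), a leaf \(\leaf{i}\in V(T)\) with \(x\in\vek{x}_i\): if an input literal \(l\) is false under \(\assign{a}\), then for this \(i\in\range{l}\cap V(T)\) we have \(\metaVar{i}{l}=0\), so the antecedent \(\bigwedge_{i\in\range{l}}\metaVar{i}{l}\) fails and the clause holds.

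It remains to verify the circuit clauses \(\rho\), \ref{N1}--\ref{N3}, and the separator clauses \ref{N5}, \ref{N6}. The root clause holds since \(\rho\in V(T)\). Clauses \ref{N1} and \ref{N2} follow directly from the defining closure properties of \(T\) at \(\lor\)- and \(\land\)-nodes. For \ref{N3} the key point is that \(T\) is an out-arborescence rooted at \(\rho\), so every node \(v\in V(T)\) other than \(\rho\) has a unique parent \(p\in V(T)\) along a tree edge \((p,v)\in E\); hence some parent variable is set to \(1\) and the clause \(v\to p_1\vee\dots\vee p_k\) is satisfied. For the separator clauses I would use the intersection property of minimal satisfying subtrees developed in Section~\ref{sec:bdmc}: for \ref{N6}, smoothness guarantees that \(T\cap D_i\) is a full root-to-leaf path in \(D_i\), which meets each separator \(S\in\mathcal{S}_i\) in exactly one node, so \(\exonerep{S}\) holds; for \ref{N5}, even without smoothness \(T\cap D_i\) is a prefix of such a path, meeting \(S\) at most once, so \(\amorep{S}\) holds.

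The main obstacle is the pair of clause groups \ref{N3} and \ref{N5}/\ref{N6}, which are exactly the clauses forcing the assignment to trace a single top-down structure. They are the reason for choosing a minimal satisfying subtree rather than an arbitrary satisfying subgraph: the arborescence structure is what supplies a unique active parent for \ref{N3}, and the separator-intersection count (exactly one under smoothness, at most one in general) is what matches \(\exonerep{S}\) and \(\amorep{S}\). A secondary point to treat carefully is that a leaf \emph{not} in \(T\) may nonetheless be consistent with \(\assign{a}\); deactivating it by setting all of \(\vek{z}_i\) to \(1\) keeps \ref{E1} and \ref{E2} satisfied and does not interfere with \ref{E3}, since a single active leaf in \(\range{l}\cap V(T)\) already witnesses the failing antecedent.
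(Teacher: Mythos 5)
Your proposal is correct and follows essentially the same route as the paper's proof: take a minimal satisfying subtree \(T\) consistent with \(\assign{a}\), set the node variables by membership in \(T\), give active leaves meta-variable values induced by a satisfying extension of \(\assign{a}\) restricted to \(\vek{x}_i\) and inactive leaves the all-ones assignment, then verify \ref{N1}--\ref{N3} and \(\rho\) from the tree structure, \ref{N5}/\ref{N6} from the at-most-one/exactly-one intersection of \(T\cap D_i\) with each separator, and \ref{E2}/\ref{E3} from agreement with \(\assign{a}\) plus smoothness. Your checks of \ref{E1} and \ref{N3} are in fact spelled out in more detail than in the paper, which dismisses them as routine.
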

\begin{proof}
Since $f(\assign{a})=1$, there is a minimal satisfying
subtree \(T\) of \(D\) which is consistent with \(\assign{a}\)
by~\eqref{eq:phi-by-trees}.
For every node \(v\in \vek{v}\) set \(\assign{c}(v)=1\) if and
only if \(v\) is in \(T\). A variable $v$ representing $\leaf{i}$
is identified with \(\metaVar{i}{\top}\) in $\vek{z}$. Consistently
with the above, set \(\assign{b}(v)=\assign{b}(\metaVar{i}{\top})=1\)
if and only if $\leaf{i}$ is in $T$.

For every formula \(\varphi_i(\vek{x}_i, \vek{y}_i)\) proceed as follows.
If \(\assign{b}(\metaVar{i}{\top})=0\), set
\(\assign{b}(\metaVar{i}{l})=1\) for
every literal \(l\in\lit{\vek{x}_i\cup\vek{y}_i}\).
If \(\assign{b}(\metaVar{i}{\top})=1\), denote \(\assign{a}_i\)
the part of \(\assign{a}\) on variables \(\vek{x}_i\).
Since \(T\) is consistent with \(\assign{a}\) and contains $\leaf{i}$,
   we get that \(\varphi_i(\vek{a}_i, \vek{y}_i)\)
   has a satisfying assignment
   \(\assign{d}_i\) to the variables \(\vek{y}_i\). Using this, set
   \(\assign{b}(\metaVar{i}{l})=\assign{a}(l)\) for every
   \(l\in\lit{\vek{x}_i}\) and
   \(\assign{b}(\metaVar{i}{l})=\assign{d}_i(l)\) for every
   \(l\in\lit{\vek{y}_i}\).

It is not hard to check that the clauses of groups~\ref{N1}--\ref{N3}, \ref{E1},
and~\ref{R} are satisfied by \(\assign{b}\cup\assign{c}\), if the leaves
are interpreted as elements of $V$ or $\vek{z}$ as appropriate for each group.
Since \(D\) is decomposable and smooth, the
intersection of \(T\) with a particular subgraph \(D_i\) is a path
from the root \(\rho\) to a leaf.
Such a path has exactly one node in common with every separator
   \(S\in\mathcal{S}_i\), so the clauses~\ref{N6} are satisfied.

Since the assignments $\assign{a}_i$ are derived from $\assign{a}$,
the clauses~\ref{E2} are satisfied by $\assign{a}\cup\assign{b}$.
Since \(D\) is smooth, for every literal \(l\) there is an index \(i\in\range{l}\)
such that \(T\) contains \(\leaf{i}\). It follows that the conjunction
\(\bigwedge_{i\in\range{l}}\metaVar{i}{l}\) is satisfied only
if $l$ is satisfied by $\assign{a}$, and thus the clauses in group~\ref{E3}
are satisfied.
\end{proof}

The following lemma shows the opposite implication.

\begin{lemma}
   \mlabel{lem:correct-2}
   Let \(D\) be a smooth BDMC covered by separators
which represents a function \(f(\vek{x})\) and
   let \(\assign{a}\), \(\assign{b}\), and \(\assign{c}\) be
   assignments to \(\vek{x}\), \(\vek{z}\), and \(\vek{v}\)
   respectively. If \(\encPC(\assign{a}, \assign{b},
      \assign{c})\) is satisfied, then \(\vek{a}\) is a model of
   \(f(\vek{x})\).
\end{lemma}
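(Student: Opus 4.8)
The plan is to show that a satisfying assignment $\assign{a}\cup\assign{b}\cup\assign{c}$ of $\encCC$ exhibits a satisfying subgraph of $D$ consistent with $\assign{a}$, so that $f(\assign{a})=1$ follows from Lemma~\ref{lem:sat-subgraph}. First I would use the assignment $\assign{c}$ to the inner-node variables $\vek{v}$, together with the leaf variables (where $\leaf{i}$ is identified with $\neg\metaVar{i}{\bot}$), to define a candidate subgraph $T$: let $T$ consist of exactly those nodes $v$ with $\assign{c}(v)=1$, and those leaves $\leaf{i}$ with $\assign{b}(\metaVar{i}{\bot})=0$. The idea is that the clauses~\ref{N1} and~\ref{N2} are precisely the implications guaranteeing that this set of ``true'' nodes is a satisfying subgraph in the sense defined before Lemma~\ref{lem:sat-subgraph}: the unit clause $\rho$ (included in every encoding) forces the root into $T$; clause~\ref{N2}, $v\to v_i$ for an $\land$-node, forces all children of a selected $\land$-node into $T$; and clause~\ref{N1}, $v\to v_1\vee\dots\vee v_k$ for an $\lor$-node, forces at least one child of a selected $\lor$-node into $T$.

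The next step is to verify that $T$ is \emph{consistent with} $\assign{a}$, i.e.\ that for every leaf $\leaf{i}$ in $T$ the associated formula $\varphi_i(\vek{x}_i,\vek{y}_i)$ is consistent with $\assign{a}$. A leaf $\leaf{i}$ is in $T$ precisely when $\assign{b}(\metaVar{i}{\bot})=0$, i.e.\ $\neg\metaVar{i}{\bot}$ holds. Here I would invoke the dual-rail machinery. The clauses in group~\ref{E2}, namely $l\to\metaVar{i}{l}$ for $l\in\lit{\vek{x}_i}$, force the meta-variables corresponding to the literals of $\assign{a}$ restricted to $\vek{x}_i$ to be set; writing $\alpha_i=\{l\in\lit{\vek{x}_i}\mid \assign{a}(l)=1\}$, this means $\litVar{\alpha_i}$ is part of $\assign{b}$. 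Since $\assign{b}$ satisfies $\xdrenc{\varphi_i}{\vek{z}_i}$ (group~\ref{E1}) and $\metaVar{i}{\bot}$ is false under $\assign{b}$, the defining implications of the dual rail encoding~\eqref{eq:def-drenc-impl} cannot have propagated $\litVar{\bot}$; via Lemma~\ref{lem:dr-enc} this gives $\varphi_i(\vek{x}_i,\vek{y}_i)\land\alpha_i\not\vdash_1\bot$. Because $\varphi_i$ is a CC encoding (URC on $\vek{x}_i$), non-refutation by unit resolution implies $\varphi_i\land\alpha_i$ is satisfiable, so $\varphi_i$ is consistent with $\assign{a}_i$, hence with $\assign{a}$.

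Having established that $T$ is a satisfying subgraph consistent with $\assign{a}$, Lemma~\ref{lem:sat-subgraph} immediately yields $f(\assign{a})=1$, completing the argument. I expect the main obstacle to be the careful bookkeeping in the middle step: one must argue cleanly that the meta-variable literals dictated by $\assign{b}$ on the leaf $\leaf{i}$, combined with the $\neg\litVar{\bot}$ value, genuinely certify via Lemma~\ref{lem:dr-enc} that no unit-resolution refutation of $\varphi_i\land\alpha_i$ exists. The subtlety is that $\assign{b}$ is merely a \emph{satisfying} assignment of $\xdrenc{\varphi_i}{\vek{z}_i}$, not an assignment closed under unit propagation; nonetheless, since unit resolution is sound, any clause (in particular $\litVar{\bot}$) derivable by $\vdash_1$ from $\drenc{\varphi_i}{\vek{z}_i}\land\litVar{\alpha_i}$ would be entailed and thus true under $\assign{b}$, so $\litVar{\bot}$ false under $\assign{b}$ blocks the derivation. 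Once this soundness observation is in place, the chain $\neg\vdash_1\bot \Rightarrow$ satisfiability of $\varphi_i\land\alpha_i$ via the CC property is routine, and the reduction to Lemma~\ref{lem:sat-subgraph} is immediate. Note that this argument uses only the dual-rail implications and not the extra clauses of $\xdrenc{}{}$, which matches the remark preceding the lemma that $\drenc{\varphi_i}{\vek{z}_i}$ in place of $\xdrenc{\varphi_i}{\vek{z}_i}$ already suffices for $\encCC$.
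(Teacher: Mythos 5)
Your proof is correct and takes essentially the same approach as the paper: extract from the satisfying assignment the substructure of nodes set to true (the paper grows a minimal satisfying subtree top-down and concludes via Lemma~\ref{lem:bdmc-sat-subtree}, whereas you take all true nodes as a satisfying subgraph and conclude via Lemma~\ref{lem:sat-subgraph} --- an immaterial difference), and then certify consistency of each selected leaf with \(\assign{a}\) through the dual-rail correspondence and the CC property of the leaf encodings. If anything, your write-up is more explicit than the paper's: the soundness argument giving \(\drenc{\varphi_i}{\vek{z}_i}\land\litVar{\alpha_i}\not\vdash_1\metaVar{i}{\bot}\), the appeal to Lemma~\ref{lem:dr-enc}, and the invocation of the CC property of \(\varphi_i\) are all compressed in the paper into a single terse sentence, and your closing observation that only the \(\drenc{\varphi_i}{\vek{z}_i}\) clauses are needed matches the paper's remark preceding the lemma.
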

\begin{proof}
   Let us denote \(D'\) the subgraph of \(D\) which is induced by the
   inner nodes \(v\) for which \(\assign{c}(v)=1\) and by the leaves
   \(v\) for which \(\assign{b}(v)=1\). We show that \(D'\)
   contains a minimal satisfying subtree \(T\) of \(D\)
   consistent with \(\assign{a}\) by the construction consisting in repeated
application of the following steps starting with $T=\{\rho\}$. Note that
$\rho$ belongs to \(D'\), since the unit clause \(\rho\) is in the encoding.
The construction stops, when all the leaves of $T$ are leaves of $D$.
   \begin{itemize}
      \item If \(v=v_1\lor\dots\lor v_k\) is a leaf of \(T\), the clauses of
         group~\ref{N1} imply that at least one of the successors \(v_i\)
         belongs to \(D'\) as well. Include one of such successors into \(T\).
      \item If \(v=v_1\land\dots\land v_k\) is a leaf of \(T\), the clauses
         of group~\ref{N2} imply that the successors \(v_1, \dots, v_k\)
         belong to \(D'\). Include all the successors of $v$ into \(T\).
   \end{itemize}

If \(v\) is a leaf of $D$ associated with \(\varphi_i(\vek{x}_i, \vek{y}_i)\)
and included into \(T\), then \(\assign{b}(\metaVar{i}{\top})=1\).
Moreover, the assignment \(\assign{b}\)
satisfies \(\xdrenc{\varphi_i}{\vek{z}_i}\) represented by the clauses of
group~\ref{E1}. In particular, we have
$\neg \metaVar{i}{x} \vee \neg \metaVar{i}{\neg x}$
for every $x \in \vek{x}_i \cup \vek{y}_i$.
Together with the clauses of group~\ref{E2} we get that the assignment
$\assign{b}$ of the meta-variables that makes
$\xdrenc{\varphi_i}{\vek{z}_i}$ satisfied, is the dual-rail encoding
of the corresponding part of the assignment \(\assign{a}\). Consequently,
$\assign{a}$ satisfies \(\varphi_i(\vek{x}_i, \vek{y}_i)\).
It follows that $T$ is a minimal satisfying subtree of $D$ consistent
with $\assign{a}$ and~\eqref{eq:phi-by-trees} implies that \(\assign{a}\)
is a model of \(f(\vek{x})\).
\end{proof}

\begin{proposition} \mlabel{prop:correctness}
Let \(D\) be a smooth BDMC covered by separators
representing a function \(f(\vek{x})\). Then
   \(\encPC\) is a CNF encoding of \(f(\vek{x})\).
\end{proposition}

\begin{proof}
Follows from lemmas~\ref{lem:correct-1} and~\ref{lem:correct-2}.
\end{proof}

\subsection{Propagation Completeness}
\label{sec:urc-pc}

Let us fix a smooth PC-BDMC \(D\) together with a cover by separators
and let \(\encPC(\vek{x}, \vek{z}, \vek{v})\) be the encoding
constructed for \(D\). We show in this section that \(\encPC\) is a PC
encoding. Since a PC encoding should satisfy the propagation completeness
on all variables, the proof is based on the structure of
the formula and is independent of the fact that the formula is an encoding
of the function represented by $D$ on the main variables
proven in Section~\ref{sec:subtree}.

The monotone circuit part of a BDMC is the same as in a DNNF and has
the same encoding as in the encoding of DNNF presented by~\citet{KS19}.
Let \(\psi_p\)
be the formula formed by clauses of groups \ref{N1}--\ref{N3},
\ref{N6}, and $\rho$
defined over variables which correspond to all nodes \(V\) of \(D\),
i.e.\ \(\vek{v}\) and the leaves. The formula denoted
$\psi_p$ by~\citet{KS19} differs from this formula
in the interpretation of the leaves, however, it has the
same structure otherwise.
Recall that if \(v\) is a leaf of BDMC \(D\) labeled with formula
\(\varphi_i(\vek{x}_i, \vek{y}_i)\), then
\(v\) is identified with \(\metaVar{i}{\top}\).

\begin{lemma}
   \mlabel{lem:bdmc-notbot-sat} 
   Let \(\alpha\subseteq\lit{V}\) be a partial assignment, such that
   \(\psi_p\land\alpha\not\vdash_1\bot\). Let \(v_0\in V\) be a node
   such that \(\psi_p\land\alpha\not\vdash_1\neg v_0\). Then
   \(\psi_p\land\alpha\land v_0\) is satisfiable.
\end{lemma}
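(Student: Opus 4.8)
The plan is to build an explicit model of $\psi_p\land\alpha\land v_0$ of the form ``the nodes of a minimal satisfying subtree $T$ are set to $1$ and all other node variables to $0$''. First I would pin down the target of this construction: if $T$ is any minimal satisfying subtree of $D$, then by smoothness $T$ meets every separator $S\in\mathcal{S}$ in exactly one node, and every non-root node of $T$ has its unique tree-parent among its predecessors in $D$; hence the indicator assignment of $T$ satisfies the clauses of groups~\ref{N1},~\ref{N2},~\ref{N3}, the exactly-one clauses~\ref{N6}, and the unit clause $\rho$, i.e.\ it satisfies $\psi_p$. So it remains to produce a minimal satisfying subtree $T$ with $v_0\in T$ whose indicator assignment also agrees with $\alpha$.

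Next I would reduce and record the propagation structure. Replacing $\alpha$ by its closure under unit propagation in $\psi_c$ changes neither the hypotheses nor the conclusion, so I assume $\alpha$ is unit-propagation closed; since $\psi_c\land\alpha\not\vdash_1\bot$, the set $\alpha$ contains no complementary pair, and $\neg v_0\notin\alpha$. Put $V'=\{v\in V\mid\psi_c\land\alpha\not\vdash_1\neg v\}$ and let $D'$ be the subgraph induced by $V'$; then $\rho,v_0\in V'$, every node forced true by $\alpha$ lies in $V'$, and no node forced false by $\alpha$ lies in $V'$. I would then record the local facts used to stay inside $V'$. For $w\in V'\setminus\{\rho\}$ the clause of group~\ref{N3} for $w$ forbids all predecessors of $w$ (the nodes with edges into $w$) from being forced false, so $w$ has a predecessor in $V'$; iterating this gives a directed path in $D'$ from $\rho$ to $v_0$. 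Dually, by group~\ref{N1} every $\lor$-node of $V'$ has a child in $V'$, and by group~\ref{N2} every child of an $\land$-node of $V'$ again lies in $V'$.

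With these facts I would build $T$ top-down inside $D'$ exactly as in the proof of Lemma~\ref{lem:correct-2}: start from $\rho$, expand every $\land$-node to all its children, and at every $\lor$-node select a single child of $V'$, making the selection follow the fixed path toward $v_0$ and, more generally, follow any forced-true child. Note that $\psi_p$ entails $\psi_c$, because group~\ref{N6} contains group~\ref{N5} as a conjunct; since unit propagation is sound, every model of $\psi_p\land\alpha$ must set all forced-true nodes to $1$. Thus the construction is \emph{obliged} to route $T$ through all of them, and the genuine content of the lemma is that this can be done without conflict.

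The hard part is precisely this no-conflict statement at $\lor$-nodes: one must show that the nodes $T$ is forced to contain --- namely $v_0$ together with all forced-true nodes --- never demand two distinct children of a single $\lor$-node, so the single-child choices defining a subtree can be made consistently. This is where smoothness and the separator cover enter, and where both hypotheses are used. By smoothness, the intersection of any minimal satisfying subtree with each $D_i$ is a single root-to-leaf path. The hypothesis $\psi_c\land\alpha\not\vdash_1\bot$ together with the at-most-one clauses~\ref{N5} forbids two forced-true nodes from sharing a separator, and since $\mathcal{S}_i$ covers $H_i$ this confines the forced-true nodes of each $H_i$ to one such path; the hypothesis $\psi_c\land\alpha\not\vdash_1\neg v_0$ likewise guarantees $v_0$ can be placed on a path compatible with them, since otherwise unit propagation through the separator at the offending branching node would derive $\neg v_0$. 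Gluing these per-coordinate paths through the smooth $\lor$-nodes yields a single minimal satisfying subtree $T\subseteq V'$ containing $v_0$ and all forced-true nodes. This is exactly the property of groups~\ref{N5} and~\ref{N6} established for Lemma~7.3 in~\cite{KS19}, and it is insensitive to the reinterpretation of the leaves, since $\psi_c$ and $\psi_p$ treat each leaf as a plain variable. Finally, the indicator assignment of this $T$ satisfies $\psi_p$ by the first step, sets $v_0$ to $1$, and agrees with $\alpha$ (its positive literals are forced-true nodes, which lie in $T$; its negative literals are forced-false nodes, which are excluded since $T\subseteq V'$), so $\psi_p\land\alpha\land v_0$ is satisfiable.
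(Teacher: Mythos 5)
Your proof takes essentially the same route as the paper's: both define $V'=\{v\in V\mid\psi_c\land\alpha\not\vdash_1\neg v\}$, build a minimal satisfying subtree inside $V'$ containing $v_0$ and all nodes forced true by $\alpha$ --- deferring the crucial no-conflict step at the $\lor$-nodes to the construction used for Lemma~7.3 in~\cite{KS19}, exactly as the paper does --- and conclude with the indicator assignment of that subtree. Your added explicit verifications (that indicator assignments of minimal satisfying subtrees satisfy $\psi_p$, the closure of $\alpha$ under unit propagation, and the consistency of the result with $\alpha$) are details the paper leaves implicit, so this is the same argument.
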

\begin{proof}
Define \(V'=\{v\in V\mid\psi_p\land\alpha\not\vdash_1\neg v\}\).
We find a minimal satisfying tree \(T\) of \(D\)
within $V'$ in the same way as used by \citet{KS19}
in the proof of Lemma~7.3. In both cases, the structure of
   the separators and, hence, clauses~\ref{N6}, depends
   on the assignment of variables \(\var{v}\) to each node. For a
   smooth DNNF, the set \(\var{v}\) for a leaf \(v\) is a singleton. This
   is not necessarily true in a smooth BDMC, since if \(v\) is associated with
   formula \(\varphi_i(\vek{x}_i, \vek{y}_i)\), then \(\var{v}\) contains all the
   variables \(\vek{x}_i\).
In both DNNF and BDMC, the set of leaves $v$ of a minimal satisfying subtree
is such that the corresponding sets $\var{v}$ form a partition of the set of
variables $\var{x}$. In DNNF, the partition is a cover by singletons while
for BDMC this is a general partition. This implies a difference in the
structure of the resulting subtree, however, the construction proceeds
in the same way.

The obtained minimal satisfying subtree \(T\) uses only nodes in \(V'\),
contains \(v_0\) and all nodes \(v\) contained in a positive literal in \(\alpha\).
The assignment $\assign{a}$ of the variables $V$ such that
$\assign{a}(v)=1$ if and only if $v \in T$ is a model of \(\psi_p\)
consistent with \(\alpha\) and \(v_0\).
\end{proof}

\begin{lemma} \mlabel{lem:base-urc-pc}
   The formula \(\psi_p\) is PC\@.
\end{lemma}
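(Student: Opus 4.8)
The plan is to prove the two claims separately, each by verifying the defining propagation property directly from the structure of $\psi_c$ and $\psi_p$, using Lemma~\ref{lem:bdmc-notbot-sat} as the main engine. Both $\psi_c$ and $\psi_p$ are defined over the variables $V$ (the node variables, where leaves are the variables $\metaVar{i}{\bot}$), so the notions of URC and PC refer to propagation on all of $V$.

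**URC for $\psi_c$.** I would take a partial assignment $\alpha\subseteq\lit{V}$ with $\psi_c\land\alpha\models\bot$ and show $\psi_c\land\alpha\vdash_1\bot$. Arguing by contraposition, assume $\psi_c\land\alpha\not\vdash_1\bot$; I must produce a satisfying assignment, contradicting $\psi_c\land\alpha\models\bot$. The idea is to pick any node $v_0$ that is not refuted by unit propagation (for instance the root $\rho$, which is forced true by the unit clause $\rho$ and hence satisfies $\psi_c\land\alpha\not\vdash_1\neg\rho$ since otherwise $\psi_c\land\alpha\vdash_1\bot$), and invoke Lemma~\ref{lem:bdmc-notbot-sat} to get a model of $\psi_p\land\alpha\land v_0$. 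Since $\psi_p$ contains all clauses of $\psi_c$ except that group~\ref{N5} is strengthened to group~\ref{N6}, and the exactly-one clauses imply the at-most-one clauses, any model of $\psi_p$ is also a model of $\psi_c$. Thus $\psi_c\land\alpha$ is satisfiable, which is the contradiction wanted.

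**PC for $\psi_p$.** Here I take $\alpha\subseteq\lit{V}$ and a literal $l\in\lit{V}$ with $\psi_p\land\alpha\models l$, and must show $\psi_p\land\alpha\vdash_1 l$ or $\psi_p\land\alpha\vdash_1\bot$. Using the reformulation of PC stated after Definition~\ref{def:pc-enc}, it suffices to prove: if $\psi_p\land\alpha\not\vdash_1\bot$ and $\psi_p\land\alpha\not\vdash_1\neg l$, then $\psi_p\land\alpha\land l$ is satisfiable. Writing $l$ as either $v_0$ or $\neg v_0$ for a node $v_0\in V$, I would handle the positive case $l=v_0$ directly by Lemma~\ref{lem:bdmc-notbot-sat}: from $\psi_p\land\alpha\not\vdash_1\bot$ I get $\psi_c\land\alpha\not\vdash_1\bot$ (since $\psi_p$ implies $\psi_c$, any unit-propagation refutation in $\psi_c$ lifts to one in $\psi_p$), and from $\psi_p\land\alpha\not\vdash_1\neg v_0$ I similarly get $\psi_c\land\alpha\not\vdash_1\neg v_0$; the lemma then yields a model of $\psi_p\land\alpha\land v_0$. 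The negative case $l=\neg v_0$, i.e.\ showing that $v_0$ can be set to $0$, is the step I expect to be the main obstacle, since Lemma~\ref{lem:bdmc-notbot-sat} only directly produces models in which a chosen node is true.

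**The hard part** is therefore the negative literals: to satisfy $\psi_p\land\alpha\land\neg v_0$ I need a model of $\psi_p$ consistent with $\alpha$ in which $v_0$ evaluates to $0$. The approach I would take is to argue that the minimal satisfying subtree $T$ constructed in (the proof of) Lemma~\ref{lem:bdmc-notbot-sat} can be chosen to avoid $v_0$ whenever $v_0$ is not forced into every such tree by unit propagation. Concretely, the condition $\psi_p\land\alpha\not\vdash_1 v_0$ should guarantee that some satisfying subtree omits $v_0$: at each $\lor$-node one has freedom to choose a successor, and the separator structure (clauses~\ref{N6}) together with the fact that $v_0$ is not unit-propagated to true should permit steering the tree around $v_0$. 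The cleanest route is probably to observe that this is exactly the content that the analogous lemma in~\cite{KS19} was designed to provide, so I would either cite the companion statement from~\cite{KS19} governing the ability to set a non-propagated node to false, or reprove it by the same bottom-up tree-construction that avoids $v_0$; once a $v_0$-avoiding model of $\psi_p$ consistent with $\alpha$ is in hand, setting $v_0=0$ in it gives the required model of $\psi_p\land\alpha\land\neg v_0$, completing the PC verification and hence the lemma.
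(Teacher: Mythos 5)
Your treatment of the URC claim and of the positive-literal case of the PC claim is correct and matches the paper's proof exactly: both reduce to Lemma~\ref{lem:bdmc-notbot-sat} via the root $\rho$ (respectively the node $v_0$ itself), using the fact that the clauses of $\psi_c$ are contained in those of $\psi_p$ because $\exonerep{S}$ contains $\amorep{S}$. But the negative-literal case, which you correctly single out as the crux, is left open: ``steering the tree around'' the node in question is not an argument, and Lemma~\ref{lem:bdmc-notbot-sat} as stated only lets you force a chosen node \emph{into} a model, not out of one. Deferring to a ``companion statement'' of~\cite{KS19} does not close the gap either --- the paper explicitly warns that its formulas $\psi_c$, $\psi_p$ differ from those of~\cite{KS19} in the interpretation of the leaves, so the argument has to be carried out in the present setting, and your proposal never produces it.

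The missing idea is a short separator argument, made \emph{after} the model is built rather than during the tree construction. To satisfy $\psi_p\land\alpha\land\neg v$ under the hypotheses $\psi_p\land\alpha\not\vdash_1\bot$ and $\psi_p\land\alpha\not\vdash_1 v$, take any separator $S\in\mathcal{S}$ containing $v$. If every node $u\in S\setminus\{v\}$ satisfied $\psi_p\land\alpha\vdash_1\neg u$, then the at-least-one clause $\bigvee_{u\in S}u$ of $\exonerep{S}$ would unit-propagate $v$, contradicting $\psi_p\land\alpha\not\vdash_1 v$. Hence there is some $v_0\in S$ with $v_0\neq v$ and $\psi_p\land\alpha\not\vdash_1\neg v_0$, and Lemma~\ref{lem:bdmc-notbot-sat} yields a model of $\psi_p\land\alpha\land v_0$. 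The at-most-one clauses of $\exonerep{S}$ then force $v$ to be false in that model, so it is the required model of $\psi_p\land\alpha\land\neg v$. Note how this sidesteps the difficulty you ran into: instead of constructing a subtree that avoids $v$ (which would indeed require reproving a variant of Lemma~\ref{lem:bdmc-notbot-sat}), one forces a \emph{different} node of the same separator to be true and lets the exactly-one constraint do the rest.
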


\begin{proof}
Let \(\alpha\subseteq
      \lit{V}\) be a partial assignment for which
   \(\psi_p\land\alpha\not\vdash_1\bot\). Let \(l\in\lit{V}\) be a
   literal such that \(\psi_p\land\alpha\not\vdash_1\neg l\) and let us show
   that \(\psi_p\land\alpha\land l\) is satisfiable. If \(l\) is
   positive, then satisfiability of \(\psi_p\land\alpha\land l\)
   directly follows by Lemma~\ref{lem:bdmc-notbot-sat}.
   Assume \(l=\neg v\)
   for some \(v\in V\), thus \(\psi_p\land\alpha\not\vdash_1 v\). Let
   us consider any separator \(S\in\mathcal{S}\) containing \(v\).
   Due to clauses of group~\ref{N6} we have that there is a node
   \(v_0\in S\) for which \(\psi_p\land\alpha\not\vdash_1\neg v_0\).
   It follows
   by Lemma~\ref{lem:bdmc-notbot-sat} that \(\psi_p\land\alpha\land
      v_0\) has a model. By clauses of group~\ref{N6} we have that
   \(v\) gets value \(0\) in any model satisfying \(v_0\), thus
   \(\psi_p\land\alpha\land\neg v=\psi_p\land\alpha\land l\) is
   satisfiable as well.
\end{proof}

Let \(\encPC'(\vek{z}, \vek{v})\) be the subformula of \(\encPC(\vek{x},
\vek{z}, \vek{v})\) formed by the conjunction of $\psi_p$ and
the clauses of group~\ref{E1}.

\begin{lemma}
   \mlabel{lem:combine-urc-pc}
The formula \(\encPC'(\vek{z}, \vek{v})\) is PC\@.
\end{lemma}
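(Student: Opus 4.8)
The plan is to regard each of \(\encURC'\) and \(\encPC'\) as the conjunction of the ``backbone'' formula from Lemma~\ref{lem:base-urc-pc} with the extended dual rail encodings sitting at the leaves (group~\ref{E1}), and to attach these dual rail encodings one leaf at a time. The decisive structural fact is that the attachment of \(\xdrenc{\varphi_i}{\vek{z}_i}\) shares only a single variable with the part already built: the sets \(\vek{z}_j\) are pairwise disjoint, and the only meta-variable of \(\vek{z}_i\) occurring in the backbone is the one identified with \(\leaf{i}\), namely \(\metaVar{i}{\bot}\). This reduces both statements to an iterated application of a two-formula combining principle along the single shared variable \(z=\metaVar{i}{\bot}\).

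For the URC part I would write \(\encURC'=\psi_c\land\bigwedge_{i=1}^\ell\xdrenc{\varphi_i}{\vek{z}_i}\) and induct on the number of attached leaves. The base \(\psi_c\) is URC by Lemma~\ref{lem:base-urc-pc}, and each factor \(\xdrenc{\varphi_i}{\vek{z}_i}\) is URC by Lemma~\ref{lem:xdrenc-urc-pc}\ref{enum:xdrenc-urc:3}. At each step I apply Lemma~\ref{lem:urc:combination} with \(\psi_1=\xdrenc{\varphi_i}{\vek{z}_i}\), \(\psi_2\) the accumulated formula, and shared variable \(z=\metaVar{i}{\bot}\). The extra hypothesis of Lemma~\ref{lem:urc:combination} -- that \(\psi_1\) is propagation complete on the literals over \(z\) -- is supplied exactly by parts \ref{enum:xdrenc-urc:1} and \ref{enum:xdrenc-urc:2} of Lemma~\ref{lem:xdrenc-urc-pc}: for \(l=\neg\litVar{\bot}\) and \(l=\litVar{\bot}\) respectively they state that \(\xdrenc{\varphi_i}{\vek{z}_i}\land\alpha\not\vdash_1\neg l\) implies \(\xdrenc{\varphi_i}{\vek{z}_i}\land\alpha\land l\) is satisfiable. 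Iterating over all leaves gives that \(\encURC'\) is URC.

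For the PC part I would use the same scheme on \(\encPC'=\psi_p\land\bigwedge_{i=1}^\ell\xdrenc{\varphi_i}{\vek{z}_i}\), where \(\psi_p\) is PC by Lemma~\ref{lem:base-urc-pc} and each \(\xdrenc{\varphi_i}{\vek{z}_i}\) is PC by Lemma~\ref{lem:xdrenc-urc-pc}\ref{enum:xdrenc-urc:4}. Since there is no ready-made PC analogue of Lemma~\ref{lem:urc:combination}, I would first establish the simpler fact that if \(\psi_1(\vek{x}_1,z)\) and \(\psi_2(\vek{x}_2,z)\) are PC on disjoint variable sets sharing only \(z\), then \(\psi_1\land\psi_2\) is PC. Using the characterization of PC after Definition~\ref{def:pc-enc}, I take \(\alpha\) closed under unit propagation in \(\psi=\psi_1\land\psi_2\) with \(\psi\land\alpha\not\vdash_1\bot\) and a literal \(l\) with \(\psi\land\alpha\not\vdash_1\neg l\), and split \(\alpha\) as \(\alpha_i=\alpha\cap\lit{\vek{x}_i\cup\{z\}}\). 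Monotonicity of unit resolution shows each \(\alpha_i\) is unit-propagation-closed in \(\psi_i\) and \(\psi_i\land\alpha_i\not\vdash_1\bot\). Applying PC of the factor containing \(l\) yields a model \(\assign{a}\) of that factor with \(l\) true; the value \(m\) it forces on \(z\) satisfies \(\neg m\notin\alpha\), so PC of the other factor yields a model agreeing on \(z=m\), and combining the two over the unique shared variable produces a model of \(\psi\land\alpha\land l\).

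The step needing the most care is the PC part, precisely because the convenient combining lemma is available only for URC. The crux is checking that the value of the shared variable \(z\) forced by satisfying one factor is never already excluded in the other; this is exactly where closure of \(\alpha\) under unit propagation in the whole formula is used, so I would reduce to a unit-propagation-closed \(\alpha\) at the very start of both the combining fact and its application. Everything else -- disjointness of the \(\vek{z}_i\), the identification of \(\metaVar{i}{\bot}\) as the unique shared variable at each attachment, and the induction bookkeeping -- is routine once this single-shared-variable structure is made explicit.
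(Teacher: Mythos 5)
Your proposal is correct and follows essentially the same route as the paper: both arguments proceed by induction over the leaves, attaching each \(\xdrenc{\varphi_i}{\vek{z}_i}\) to the backbone formula (\(\psi_c\) or \(\psi_p\) from Lemma~\ref{lem:base-urc-pc}) through the single shared variable \(\metaVar{i}{\bot}\), using Lemma~\ref{lem:urc:combination} together with Lemma~\ref{lem:xdrenc-urc-pc}\ref{enum:xdrenc-urc:1}--\ref{enum:xdrenc-urc:3} for the URC case and a single-shared-variable PC combining fact together with Lemma~\ref{lem:xdrenc-urc-pc}\ref{enum:xdrenc-urc:4} for the PC case. The only deviation is that the paper simply cites~\cite{BM12} for the fact that a conjunction of two PC formulas sharing one variable is PC, whereas you prove it directly; your sketch of that proof (splitting a propagation-closed \(\alpha\) into the two factors, satisfying the factor containing \(l\), and transferring the forced value of the shared variable to the other factor) is sound.
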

\begin{proof}
By Lemma~\ref{lem:base-urc-pc}, \(\psi_p\) is PC\@.
Let us set \(\psi_0=\psi_p\). It was shown by~\citet{BM12} that a
   formula which is composed as a conjunction of two PC formulas
   sharing a single variable is PC\@. By inductive use of this
   argument together with Lemma~\ref{lem:xdrenc-urc-pc} we obtain
   that for each \(i=1, \dots, \ell\) the formula
   \(\psi_i=\psi_{i-1}\land\xdrenc{\varphi_i}{\vek{z}_i}\) is PC\@. The
   proposition follows since \(\encPC'(\vek{z}, \vek{v})=\psi_\ell\).
\end{proof}

The following lemma allows to extend models of \(\encPC'\) to
models of \(\encPC\).

\begin{lemma}
   \mlabel{lem:urc:models}
   Let \(\assign{b}\cup\assign{c}\) be a model of \(\encPC'(\vek{z},
      \vek{v})\) where \(\assign{b}\) assigns values to \(\vek{z}\)
   and \(\assign{c}\) assigns values to \(\vek{v}\). Then
the equivalences
   \begin{equation}\label{eq:urc:models}
      l \Leftrightarrow \bigwedge_{i \in \range{l}}
      \metaVar{i}{l}
   \end{equation}
for every \(l\in\lit{\vek{x}}\) are satisfiable by a uniquely determined
assignment $\assign{a}$ of values to the variables $\vek{x}$.
Moreover, \(\assign{a}\cup\assign{b}\cup\assign{c}\) forms a model of
\(\encPC(\vek{x}, \vek{z}, \vek{v})\) and the assignment $\assign{a}$
satisfying~\eqref{eq:urc:models} is the only assignment with this property.
\end{lemma}
\begin{proof}
The identities~\eqref{eq:urc:models} are equivalent to
the clauses of groups~\ref{E2} and~\ref{E3} and if they are
satisfiable for a given assignment of the variables $\vek{z}$,
they uniquely determine an assignment of the values to the
variables $\vek{x}$. Hence, it is sufficient to
prove that~\eqref{eq:urc:models} is satisfiable,
if $\encPC'(\vek{b}, \vek{c})$ is satisfied.

Consider any literal \(l\in\lit{\vek{x}}\).
Model \(\assign{b}\cup\assign{c}\) specifies a minimal satisfying subtree
\(T\) of \(D\). Since \(D\) is decomposable and smooth, \(T\) contains
a leaf \(v=\metaVar{j}{\top}\) for exactly one \(j\in\range{l}\).
It follows that $\assign{b}(\metaVar{j}{\top})=1$ while 
$\assign{b}(\metaVar{i}{\top})=0$ for every
\(i\in\range{l}\setminus\{j\}\).
Moreover, for every \(i\in\range{l}\setminus\{j\}\), we have
   \(\assign{b}(\metaVar{i}{l})=\assign{b}(\metaVar{i}{\neg l})=1\)
   due to clauses \(\metaVar{i}{\top}\vee\metaVar{i}{l}\) and
   \(\metaVar{i}{\top}\vee\metaVar{i}{\neg l}\) in
\(\xdrenc{\varphi_i}{\vek{z}_i}\).
On the other hand, since $\assign{b}$ satisfies
$\xdrenc{\varphi_j}{\vek{z}_j} \wedge  \metaVar{j}{\top}$,
we have $\assign{b}(\metaVar{j}{l}) = \neg \assign{b}(\metaVar{j}{\neg l})$.

It follows that for every variable $x \in \vek{x}$, the right hand sides
of~\eqref{eq:urc:models} for $l=x$ and $l = \neg x$ have complementary
values. Hence, it is possible to assign a value to $x$, such
that~\eqref{eq:urc:models} is satisfied for $l \in \{x, \neg x\}$.
For a fixed assignment of the variables $\vek{z}$,
the equivalences~\eqref{eq:urc:models} for literals $l$ on different variables
from $\vek{x}$ are independent. This completes the proof.
\end{proof}

The main result of this section is the following.

\begin{proposition} \mlabel{prop:main-PC}
   Formula \(\encPC(\vek{x}, \vek{z}, \vek{v})\) is propagation complete.
\end{proposition}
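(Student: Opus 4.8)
The plan is to verify the reformulation of propagation completeness given right after Definition~\ref{def:pc-enc}: it suffices to show that for every partial assignment \(\alpha\subseteq\lit{\vek{x}\cup\vek{z}\cup\vek{v}}\) with \(\encPC(\vek{x}, \vek{z}, \vek{v})\land\alpha\not\vdash_1\bot\) and every literal \(l\) with \(\encPC(\vek{x}, \vek{z}, \vek{v})\land\alpha\not\vdash_1\neg l\), the formula \(\encPC(\vek{x}, \vek{z}, \vek{v})\land\alpha\land l\) is satisfiable. As in Proposition~\ref{prop:main-URC} I would assume without loss of generality that \(\alpha\) is closed under unit propagation, so that \(\encPC(\vek{x}, \vek{z}, \vek{v})\land\alpha\not\vdash_1 m\) is equivalent to \(m\notin\alpha\), and put \(\alpha'=\alpha\cap\lit{\vek{z}\cup\vek{v}}\). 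I would rely on two facts. First, \(\encPC'(\vek{z}, \vek{v})\) is PC by Lemma~\ref{lem:combine-urc-pc}. Second, every model \(\assign{b}\cup\assign{c}\) of \(\encPC'(\vek{z}, \vek{v})\) extends uniquely to a model of \(\encPC(\vek{x}, \vek{z}, \vek{v})\) satisfying the equivalence~\eqref{eq:urc:models}: indeed \(\encPC'\) implies \(\encURC'\) since \(\exonerep{S}\) implies \(\amorep{S}\), so Lemma~\ref{lem:urc:models} applies and the unique extension \(\assign{a}\) is a model of \(\encURC(\vek{x}, \vek{z}, \vek{v})\); the equivalence~\eqref{eq:urc:models} then supplies the clauses of group~\ref{E3}, while group~\ref{N6} already holds in \(\assign{b}\cup\assign{c}\), so the extension is a model of \(\encPC\). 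Exactly as in Proposition~\ref{prop:main-URC}, closure of \(\alpha\) under unit propagation together with group~\ref{E2} forces any such extension of a model of \(\encPC'(\vek{z}, \vek{v})\land\alpha'\) to satisfy all of \(\alpha\).

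I would then split according to the kind of \(l\). If \(l\in\lit{\vek{z}\cup\vek{v}}\), then monotonicity of unit propagation gives \(\encPC'(\vek{z}, \vek{v})\land\alpha'\not\vdash_1\bot\) and \(\encPC'(\vek{z}, \vek{v})\land\alpha'\not\vdash_1\neg l\), so PC of \(\encPC'\) produces a model \(\assign{b}\cup\assign{c}\) of \(\encPC'(\vek{z}, \vek{v})\land\alpha'\land l\); its unique extension is a model of \(\encPC(\vek{x}, \vek{z}, \vek{v})\land\alpha\land l\), since \(l\) is already set by \(\assign{b}\cup\assign{c}\). This case is Proposition~\ref{prop:main-URC} with the extra literal carried along.

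The case \(l\in\lit{\vek{x}}\) is where I expect the main obstacle. In a model of \(\encPC\) a literal \(l\) on an input variable is, by~\eqref{eq:urc:models}, equivalent only to the conjunction \(\bigwedge_{i\in\range{l}}\metaVar{i}{l}\) and to no single meta-literal, so the single-literal PC property of \(\encPC'\) cannot be applied to \(l\) itself. The way around this is to force one carefully chosen meta-literal whose propagation inside a leaf produces the whole conjunction. Since \(\neg l\notin\alpha\), the clause of group~\ref{E3} for \(\neg l\) shows there is an index \(j\in\range{l}\) with \(\metaVar{j}{\neg l}\notin\alpha\), which is the choice already made in Proposition~\ref{prop:dc}; I would fix this \(j\) and apply PC of \(\encPC'\) to the literal \(\neg\metaVar{j}{\neg l}\). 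Since \(\encPC(\vek{x}, \vek{z}, \vek{v})\land\alpha\not\vdash_1\metaVar{j}{\neg l}\), monotonicity gives \(\encPC'(\vek{z}, \vek{v})\land\alpha'\not\vdash_1\metaVar{j}{\neg l}\), so there is a model \(\assign{b}\cup\assign{c}\) of \(\encPC'(\vek{z}, \vek{v})\land\alpha'\land\neg\metaVar{j}{\neg l}\).

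It remains to read off \(l\). Because \(j\in\range{l}\), we have \(l\in\lit{\vek{x}_j}\) and the encoding \(\xdrenc{\varphi_j}{\vek{z}_j}\) in group~\ref{E1} contains both \(\metaVar{j}{\bot}\to\metaVar{j}{\neg l}\) and \(\metaVar{j}{l}\lor\metaVar{j}{\neg l}\); from \(\assign{b}(\metaVar{j}{\neg l})=0\) these give \(\assign{b}(\metaVar{j}{\bot})=0\) and \(\assign{b}(\metaVar{j}{l})=1\). By Lemma~\ref{lem:urc:models} exactly one index \(i\in\range{l}\) satisfies \(\assign{b}(\metaVar{i}{\bot})=0\); as this holds for \(i=j\), every other \(i\in\range{l}\) has \(\assign{b}(\metaVar{i}{\bot})=1\) and hence \(\assign{b}(\metaVar{i}{l})=1\). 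Thus \(\bigwedge_{i\in\range{l}}\metaVar{i}{l}\) holds in \(\assign{b}\), and~\eqref{eq:urc:models} yields \(\assign{a}(l)=1\). The unique extension \(\assign{a}\cup\assign{b}\cup\assign{c}\) is therefore a model of \(\encPC(\vek{x}, \vek{z}, \vek{v})\land\alpha\land l\), which completes the plan.
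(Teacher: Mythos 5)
Your proof is correct and takes essentially the same route as the paper: the same case split on where the literal $l$ lives, the same appeals to Lemma~\ref{lem:combine-urc-pc} (propagation completeness of $\encPC'$) and Lemma~\ref{lem:urc:models} (unique extension satisfying~\eqref{eq:urc:models}), and, for $l\in\lit{\vek{x}}$, the same choice of $j\in\range{l}$ with $\metaVar{j}{\neg l}\notin\alpha$ followed by forcing $\neg\metaVar{j}{\neg l}$. The only difference is the final step of reading off $l$: the paper concludes in one line from the group~\ref{E2} clause $\neg l\to\metaVar{j}{\neg l}$ (any model in which $\metaVar{j}{\neg l}$ is false satisfies $l$), whereas you re-derive it through the meta-variables, citing along the way a fact---that exactly one $i\in\range{l}$ has $\assign{b}(\metaVar{i}{\bot})=0$---which is established inside the \emph{proof} of Lemma~\ref{lem:urc:models} rather than in its statement; this detour is avoidable, since applying the stated equivalence~\eqref{eq:urc:models} to the literal $\neg l$ already gives $\assign{a}(\neg l)=0$ once $\assign{b}(\metaVar{j}{\neg l})=0$ falsifies $\bigwedge_{i\in\range{l}}\metaVar{i}{\neg l}$.
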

\begin{proof}
   Let \(\alpha\subseteq\lit{\vek{x}\cup\vek{z}\cup\vek{v}}\) be a
   partial assignment closed under unit propagation in
   \(\encPC(\vek{x}, \vek{z}, \vek{v})\). In particular,
   \(\encPC(\vek{x}, \vek{z}, \vek{v})\land\alpha\not\vdash_1\bot\).
   Let \(l\in\lit{\vek{x}\cup\vek{z}\cup\vek{v}}\) such that
   \(\neg l\not\in\alpha\). We shall show that \(\encPC(\vek{x},
      \vek{z}, \vek{v})\land\alpha\land l\) is satisfiable.

   Let us first assume that \(l\in\lit{\vek{z}\cup\vek{v}}\). Consider
   the restriction \(\alpha'=\alpha\cap\lit{\vek{z}\cup\vek{v}}\).
We have \(\encPC'(\vek{z},
      \vek{v})\land\alpha'\not\vdash_1\bot\) and \(\encPC'(\vek{z},
      \vek{v})\land\alpha'\not\vdash_1 \neg l\). It follows by
   Lemma~\ref{lem:combine-urc-pc} that \(\encPC'(\vek{z},
      \vek{v})\land\alpha'\land l\) has a model
   \(\assign{b}\cup\assign{c}\) where \(\assign{b}\) assigns values to
   \(\vek{z}\) and \(\assign{c}\) assigns values to \(\vek{v}\). We have by Lemma~\ref{lem:urc:models} that there is
   an assignment \(\assign{a}\) of values to \(\vek{x}\) such that
   \(\assign{a}\cup\assign{b}\cup\assign{c}\) forms a model of
   \(\encPC(\vek{x}, \vek{z}, \vek{v})\land\alpha'\land l\). Due to
   clauses of group~\ref{E2} we have for every literal
   \(e\in\alpha\cap\lit{\vek{x}}\) and every \(i\in\range{e}\) that
   \(\metaVar{i}{e}\in\alpha'\) and thus
   \(\assign{b}(\metaVar{i}{e})=1\). By~\eqref{eq:urc:models} we get
   that \(\assign{a}\) is consistent with \(\alpha\cap\lit{\vek{x}}\)
   and thus \(\assign{a}\cup\assign{b}\cup\assign{c}\) is a model of
   \(\encPC(\vek{x}, \vek{z}, \vek{v})\land\alpha\land l\).

Let us now assume that \(l\in\lit{\vek{x}}\). By assumption \(\neg
      l\not\in\alpha\) thus there is some \(i\in\range{l}\) for which
   \(\metaVar{i}{\neg l}\not\in\alpha\) (otherwise \(l\) would be
   derived using a clause of group~\ref{E3}). It follows by the
   previous case that formula \(\encPC(\vek{x}, \vek{z},
      \vek{v})\land\alpha\land\neg\metaVar{i}{\neg l}\) is
   satisfiable. Due to clause \(\neg l\to\metaVar{i}{\neg l}\) of
   group~\ref{E2} we have that any model of \(\encPC(\vek{x}, \vek{z},
      \vek{v})\land\alpha\land\neg\metaVar{i}{\neg l}\) satisfies
   \(l\) and thus \(\encPC(\vek{x}, \vek{z}, \vek{v})\land\alpha\land
      l\) is satisfiable.
\end{proof}

\section{Conclusion and Further Research} 
\label{sec:conclusion} 

We have introduced the language of $\mathcal{C}$-BDMCs which is a
common generalization of DNNFs~\citep[introduced by][]{D99} and
$\mathcal{C}$-backdoor trees~\citep[introduced by][]{SS08}.
Moreover, URC-BDMCs contain the disjunctive closure of URC encodings
$\operatorname{\mathtt{URC-C}}[\lor, \exists]$~\cite{BJM12}
as a subset.

We have shown that PC-BDMCs
are polynomially equivalent to PC encodings~\citep[introduced
by][]{BM12}. In particular, the language of disjunctions of PC encodings
is polynomially equivalent to PC encodings. By a slight modification of
our construction, we get a similar result for URC encodings which is
a generalization of the results of~\citet{BJM12}.
We have demonstrated that PC-BDMCs and PC encodings have the same
properties as DNNFs with respect to query answering and transformations
considered in the knowledge compilation map~\cite{DM02}.
Using the results of~\citet{BCMS14,BCMS16},
PC encodings and PC-BDMCs are strictly more succinct than
DNNFs and we have shown that they are also strictly more succinct than
generalized PC-backdoor trees we have introduced
in Section~\ref{sec:relations}.

Although PC encodings and PC-BDMCs are polynomially equivalent, a
compilation from a CNF into a PC-BDMC can be easier than to a PC
encoding. In particular, we might consider modifications of the
techniques of compiling a CNF into a DNNF~\citep[see
e.g.,][]{D04,LM17,MMBH12} such that the process of splitting the
formula into pieces stops at encodings from a class $\mathcal{C}$,
so earlier than at the literals. Note that some
of the compilers into a DNNF~\citep[e.g., D4 introduced by][]{LM17}
actually compile into a Decision DNNF which is a DNNF where only
decision nodes and decomposable \(\land\)-gates are allowed. It is
natural to consider $\mathcal{C}$-BDMCs restricted in the same way.
Such Decision $\mathcal{C}$-BDMCs form a language more restricted than
general $\mathcal{C}$-BDMCs, but still more general than
generalized $\mathcal{C}$-backdoor trees.
Further research is needed to determine whether one can construct
a reasonably efficient compiler of a CNF formula into
a \(\mathcal{C}\)-BDMC which uses for example prime 2-CNFs (which are
PC) or renamable Horn formulas (which are URC) as the base class
$\mathcal{C}$.


\section*{Acknowledgements}

Petr Ku{\v c}era acknowledges the support by
Czech-French Mobility program (Czech Ministry of Education, grant
No.~7AMB17FR027). Both authors acknowledge the
support by Grant Agency of the Czech Republic (grant No.~GA19--19463S).
Both authors would like to thank Pierre Marquis,
Jean-Marie Lagniez, Gilles Audemard, and Stefan Mengel (from CRIL,
U. Artois, Lens, France) for discussion on the compilation of CNF formulas
into more general structures than DNNF\@.

\bibliography{ms}
\bibliographystyle{plainnat}

\end{document}